\newcites{Supp}{References}
\newtheorem{defn}{Definition}
\newtheorem{theorem}{Theorem}
\newtheorem{lemma}{Lemma}
\newtheorem{proposition}{Proposition}
\newtheorem{corollary}{Corollary}
\newenvironment{psketch}{%
  \proof}{\endproof}
\DeclareMathAlphabet\mathbfcal{OMS}{cmsy}{b}{n}
\newcommand{\removelatexerror}{\let\@latex@error\@gobble}
\newcommand{\xdashleftrightarrow}[2][]{\ext@arrow 3359\leftrightarrowfill@@{#1}{#2}}
\def\leftrightarrowfill@@{\arrowfill@@\leftarrow\relbar\rightarrow}
\def\arrowfill@@#1#2#3#4{%
  $\m@th\thickmuskip0mu\medmuskip\thickmuskip\thinmuskip\thickmuskip
   \relax#4#1
   \xleaders\hbox{$#4#2$}\hfill
   #3$%
}
\title{Causal Identification under Markov Equivalence}
\author{ {\bf Amin Jaber} \\
Computer Science Department \\
Purdue University, IN, USA \\
jaber0@purdue.edu \\
\And
{\bf Jiji Zhang}  \\
Philosophy Department \\
Lingnan University, NT, HK \\
jijizhang@ln.edu.hk \\
\And
{\bf Elias Bareinboim} \\
Computer Science Department \\
Purdue University, IN, USA\\
eb@purdue.edu \\
}
\begin{document}

\maketitle

\begin{abstract}
Assessing the magnitude of cause-and-effect relations is one of the central challenges found throughout the empirical sciences.
The problem of identification of causal effects is concerned with determining whether a causal effect can be computed from a combination of observational data and substantive knowledge about the domain under investigation, which is formally expressed in the form of a causal graph. In many practical settings, however, the knowledge available for the researcher is not strong enough so as to specify a unique causal graph.
Another line of investigation attempts to use observational data to learn a qualitative description of the domain called a Markov equivalence class, which is the  collection of causal graphs that share the same set of observed features.  In this paper, we marry both approaches and study the problem of causal identification from an equivalence class, represented by a partial ancestral graph (PAG).
We start by deriving a set of graphical properties of PAGs that are carried over to its induced subgraphs. We then develop an algorithm to compute the effect of an arbitrary set of variables on an arbitrary outcome set. We show that the algorithm is strictly more powerful than the current state of the art found in the literature.
\end{abstract}

\section{INTRODUCTION}
\label{sec:intro}
Science is about explaining the mechanisms underlying a phenomenon that is being investigated. One of the marks imprinted by these mechanisms in reality is cause and effect relationships.
Systematically discovering the existence, and magnitude, of causal relations constitutes, therefore, a central task in scientific domains.
The value of inferring causal relationships  is also tremendous in other, more practical domains, including, for example, engineering and business, where it is often crucial to understand how to bring about a specific change when a constrained amount of controllability is in place. If our goal is to build AI systems that can act and learn autonomously, formalizing the principles behind causal inference, so that these systems can leverage them, is a fundamental requirement \citep{pearl2018}. 
 
One prominent approach to infer causal relations leverages a combination of substantive knowledge about the domain under investigation, usually encoded in the form of a causal graph, with observational (non-experimental) data  \citep{pearl:2k,spirtes2001causation,bareinboim:pea:16-r450}.
A sample causal graph is shown in Fig.~\ref{fig:introdag} such that the nodes represent variables, directed edges represent direct causal relation from tails to heads, and bi-directed arcs represent the presence of unobserved (latent) variables that generate a spurious association between the variables, also known as \textit{confounding bias}~\citep{pearl:93c}.
The task of determining whether an interventional (experimental) distribution can be computed from a combination of observational and experimental data together with the causal graph is known as the problem of identification of causal effects (identification, for short). 
For instance, a possible task in this case is to identify the effect of $do(X\!\!=\!\!x)$ on $V_4\!\!=\!\!v_4$, i.e. $P_x(v_4)$, given the causal graph in Fig.~\ref{fig:introdag} and data from the observational distribution $P(x,v_1, ..., v_4)$.

The problem of identification has been extensively studied in the literature, and a number of criteria have been established~\citep{pearl:93c,galles:pea95,kuroki1999identifiability,tian2002general,Huang:2006:PCI:3020419.3020446,shpitser2006identification,bareinboim2012causal}, which include the celebrated back-door criterion and the do-calculus \citep{pearl1995causal}.
Despite their power, these techniques require 
a fully specified causal graph, which 
is not always available 
in practical settings.

Another line of investigation attempts to learn a qualitative description of the system, which in the ideal case would lead to the ``true'' data-generating model, the blueprint underlying the phenomenon being investigated. These efforts could certainly be deemed more ``data-driven'' and aligned with the 
zeitgeist in machine learning.
In practice, however, it is 
common that only an equivalence class of causal models can be consistently inferred from observational data~\citep{verma1993graphical,spirtes2001causation,zhang2008completeness}.
One useful characterization of such an equivalence class comes under the rubric of \textit{partial ancestral graphs (PAGs)}, which will be critical to our work.
Fig.~\ref{fig:introex} shows the PAG (right) that can be inferred from observational data that is consistent with the true causal model (left).
The directed edges in a PAG signify ancestral relations (not necessarily direct) and circle marks stand for structural uncertainty.

In this paper, we analyze the marriage of these two lines of investigation, where the structural invariance learned in the equivalence class will be used as input to identify the strength of 
causal effect relationships, if possible.
Identification from an equivalence class is considerably more challenging than from a single diagram due to the structural uncertainty regarding both the direct causal relations among the variables and the presence of latent variables that confounds causal relations between observed variables.
Still, there is a growing interest in identifiability results in this setting \citep{marloes10}.
\citet{zhang2007generalized} extended the do-calculus to PAGs.
In practice, however, it is in general 
computationally hard to decide whether there exists (and, if so, find) a sequence of applications of the rules of the generalized calculus to identify the interventional distribution.
\citet{perkovic2015complete} generalized the back-door criterion to PAGs, and provided a sound and complete algorithm to find a back-door admissible set, should such a set exist.
However, in practice, the back-door criterion is not as powerful as the do-calculus, since no adjustment set exists for many identifiable causal effects.
\citet{jaber18} generalized the work of \citep{tian2002general} and devised a graphical criterion to identify causal effects with singleton interventions in PAGs.
\footnote{Another possible approach is based on  SAT (boolean constraint satisfaction) solvers  \citep{hyttinen2015calculus}. Given its somewhat distinct nature, a closer comparison lies outside the scope of this paper. We note, however, that an open research direction would be to translate our systematic approach into logical rules so as to help improving the solver's scalability.}

Building on this work, we develop here a decomposition strategy akin to the one introduced in~\citep{tian2002studies} to identify causal effects given a PAG.
Our proposed approach is computationally more attractive than the do-calculus as it provides a systematic procedure to identify a causal effect, if identifiable.
It is also more powerful than the generalized adjustment criterion, as we show later.
More specifically, our main contributions are: 
\begin{enumerate}
\item We study some critical properties of PAGs and show that they also hold in induced subgraphs of a PAG over an arbitrary subset of nodes. We further study Tian's c-component decomposition and relax it to PAGs (when only partial knowledge about the ancestral relations and c-components is available).
\item We formulate a systematic procedure to compute the effect of an arbitrary set of intervention variables on an arbitrary outcome set from a PAG and observational data. We show that this algorithm is strictly more powerful than the adjustment criterion. 
\end{enumerate}

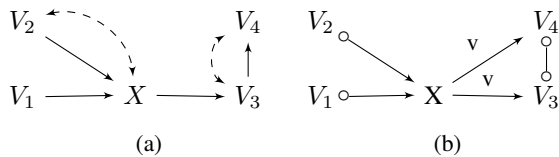
\begin{figure}[t]
\centering
\begin{subfigure}{0.5\columnwidth}
\begin{tikzpicture}
\tikzset{vertex/.style = {shape=circle,draw,minimum size=1.5em}}
\tikzset{edge/.style = {->,> = latex'}}

\node[below] (v1) at (0,0) {$V_1$};
\node[below] (v2) at (0,1) {$V_2$};
\node[below] (x) at (1.5,0) {$X$};
\node[below] (v3) at (3,0) {$V_3$};
\node[below] (v4) at (3,1) {$V_4$};

\draw[edge] (v1) to (x);
\draw[edge] (v2) to (x);
\draw[latex'-latex',dashed] (v2) to [bend left=50]  (x);
\draw[edge] (x) to (v3);
\draw[edge] (v3) to (v4);
\draw[latex'-latex',dashed] (v3) to [bend left=60] (v4);

\end{tikzpicture}
\caption{\label{fig:introdag}}
\end{subfigure}%
\begin{subfigure}{0.5\columnwidth}
\begin{tikzpicture}
\tikzset{vertex/.style = {shape=circle,draw,minimum size=1.5em}}
\tikzset{edge/.style = {->,> = latex'}}

\node[below] (v1) at (0,0) {$V_1$};
\node[below] (v2) at (0,1) {$V_2$};
\node[below] (x) at (1.5,0) {X};
\node[below] (v3) at (3,0) {$V_3$};
\node[below] (v4) at (3,1) {$V_4$};

\draw[shorten <= 2pt,edge] (v1) to node[at start]{$\circ$} (x);
\draw[shorten <= 2pt,edge] (v2) to node[at start]{$\circ$} (x);
\draw[edge] (x) to node[auto=left] {\small v}  (v3);
\draw[edge] (x) to node[auto=left] {\small v} (v4);
\draw[shorten <= 2pt,shorten >= 2pt] (v3) to node[at start]{$\circ$} node[at end]{$\circ$} (v4);

\end{tikzpicture}
\caption{\label{fig:intropag}}
\end{subfigure}
\caption{\label{fig:introex}A causal model (left) and the inferred PAG (right).}
\end{figure}

\section{PRELIMINARIES}
\label{sec:prelim}
In this section, we introduce the basic notation and machinery used throughout the paper.
Bold capital letters denote sets of variables, while bold lowercase letters stand for particular assignments to those variables.

\textbf{Structural Causal Models. }
We use the language of Structural Causal Models (SCM) \citep[pp. 204-207]{pearl:2k} as our basic semantic framework.
Formally, an SCM $M$ is a 4-tuple $\langle U,V,F,P(u)\rangle$, where $U$ is a set of exogenous (latent) variables and $V$ is a set of endogenous (measured) variables.
$F$ represents a collection of functions $F=\{f_i\}$ such that each endogenous variable $V_i\in V$ is determined by a function $f_i\in F$, where $f_i$ is a mapping from the respective domain of $U_i \cup Pa_i$ to $V_i$, $U_i\subseteq U$, $Pa_i\subseteq V\setminus V_i$. 
The uncertainty is encoded through a probability distribution over the exogenous variables, $P(u)$. 
A causal diagram associated with an SCM encodes the structural relations among 
$V\cup U$, in which an arrow is drawn from each member of $U_i \cup Pa_i$ to $V_i$.
We constraint our results to recursive systems, which means that the corresponding  diagram will be acyclic.
The marginal distribution over the endogenous variables $P(v)$ is called observational, and factorizes according to the causal diagram, i.e.:
\begin{align*}
P(v)=\sum_{u}\prod_i P(v_i|pa_i,u_i) P(u)
\end{align*}

Within the structural semantics, performing an action $X\!\! =\!\! x$ is represented through the do-operator, \textit{do}($X\!=\!x$), which encodes the operation of replacing the original equation for $X$ by the constant $x$ and induces a submodel $M_x$.
The resulting distribution is denoted by $P_x$, which is the main target for identification in this paper.
For details on structural models, we refer readers to \citep{pearl:2k}. 

\textbf{Ancestral Graphs. }
We now introduce a graphical representation of equivalence classes of causal diagrams.
A \textit{mixed} graph can contain directed ($\rightarrow$) and bi-directed edges ($\leftrightarrow$).
$A$ is a \textit{spouse} of $B$ if $A\leftrightarrow B$ is present.
An \textit{almost directed cycle} happens when $A$ is both a spouse and an ancestor of $B$.
An \textit{inducing path relative to} $\mathbf{L}$ is a path on which every node $V\notin \mathbf{L}$ 
(except for the endpoints) is a collider on the path (i.e., both edges incident to $V$ are into $V$) and every collider is an ancestor of an endpoint of the path.
A mixed graph is \textit{ancestral} if it doesn't contain a directed or almost directed cycle.
It is \textit{maximal} if there is no inducing path (relative to the empty set) between any two non-adjacent nodes.
A \textit{Maximal Ancestral Graph} (MAG) is a graph that is both ancestral and maximal.
 MAG models are closed under marginalization ~\citep{richardson2002ancestral}.

In general, a causal MAG represents a set of causal models with the same set of observed variables that entail the same independence and ancestral relations among the observed variables.
Different MAGs may be Markov equivalent in that they entail the exact same independence model.
A partial ancestral graph (PAG) represents an equivalence class of MAGs $[\mathcal{M}]$, which shares the same adjacencies as every MAG in $[\mathcal{M}]$ and displays all and only the invariant edge marks.

\begin{defn}[PAG]
Let $[\mathcal{M}]$ be the Markov equivalence class of an arbitrary MAG $\mathcal{M}$.
The PAG for $[\mathcal{M}]$, $\mathcal{P}$, is a partial mixed graph such that:
\begin{enumerate}[label=\roman*.]
\item $\mathcal{P}$ has the same adjacencies as $\mathcal{M}$ (and hence any member of $[\mathcal{M}]$) does.
\item An arrowhead is in $\mathcal{P}$ iff shared by all MAGs in $[\mathcal{M}]$.
\item A tail is in $\mathcal{P}$ iff shared by all MAGs in $[\mathcal{M}]$.
\item A mark that is neither an arrowhead nor a tail is recorded as a circle.
\end{enumerate}
\end{defn}

A PAG is learnable from the conditional independence and dependence relations among the observed variables and the FCI algorithm is a standard method to learn such an object~\citep{zhang2008completeness}.
In short, a PAG represents an equivalence class of causal models with the same observed variables and independence model.

\textbf{Graphical Notions. }
Given a DAG, MAG, or PAG, a path between $X$ and $Y$ is \textit{potentially directed (causal)} from $X$ to $Y$ if there is no arrowhead on the path pointing towards $X$.
$Y$ is called a \textit{possible descendant} of $X$ and $X$ a \textit{possible ancestor} of $Y$ if there is a potentially directed path from $X$ to $Y$.
A set $\mathbf{A}$ is (\textit{descendant}) \textit{ancestral} if no node outside $\mathbf{A}$ is a possible (descendant) ancestor of any node in $\mathbf{A}$.
$Y$ is called a \textit{possible child} of $X$, i.e. $Y\in\texttt{Ch}(X)$, and $X$ a \textit{possible parent} of $Y$, i.e. $X\in\texttt{Pa}(Y)$, if they are adjacent and the edge is not into $X$.
For a set of nodes $\mathbf{X}$, we have $\texttt{Pa}(\mathbf{X})=\cup_{X\in\mathbf{X}}\texttt{Pa}(X)$ and $\texttt{Ch}(\mathbf{X})=\cup_{X\in\mathbf{X}}\texttt{Ch}(X)$.
Given two sets of nodes $\mathbf{X}$ and $\mathbf{Y}$, a path between them is called \textit{proper} if one of the endpoints is in $\mathbf{X}$ and the other is in $\mathbf{Y}$, and no other node on the path is in $\mathbf{X}$ or $\mathbf{Y}$.
For convenience, we use an 
asterisk (*) to denote any possible mark of a PAG ($\circ,>,\--$) or a MAG ($>,\--$).
If the edge marks on a path between $X$ and $Y$ are all circles, we call the path a \textit{circle path}.





A directed edge $X\rightarrow Y$ in a MAG or PAG is \textit{visible} if there exists no DAG $\mathcal{D}(\mathbf{V},\mathbf{L})$ in the corresponding equivalence class where there is an inducing path between $X$ and $Y$ that is into $X$ relative to $\mathbf{L}$.
This implies that a visible edge is not confounded ($X\leftarrow U_i \rightarrow Y$ doesn't exist).
Which directed edges are visible is easily decidable by a graphical condition \citep{zhang2008causal}, so we simply mark visible edges by ${\small v}$.
For brevity, we refer to any edge that is not a visible directed edge as \textit{invisible}.

\textbf{Identification Given a Causal DAG. }
\citet{tian2002general} presented an identification algorithm based on a decomposition strategy of the DAG into a set of so-called \textit{c-components} (confounded components).

\begin{defn}[C-Component]
In a causal DAG, two observed variables are said to be in the same c-component if and only if they are connected by a bi-directed path, i.e. a path composed solely of such bi-directed treks as $V_i\leftarrow U_{ij} \rightarrow V_j$, where $U_{ij}$ is an exogenous variable.
\end{defn}

For convenience, we often refer to a bi-directed trek like $V_i\leftarrow\!U_{ij}\!\rightarrow V_j$ as a bi-directed edge between $V_i$ and $V_j$ (and $U_{ij}$ is often left implicit). 
For any set $\mathbf{C}\subseteq\mathbf{V}$, we define the quantity $Q[\mathbf{C}]$ 
to denote the post-intervention distribution of $\mathbf{C}$ under an intervention on $\mathbf{V}\setminus\mathbf{C}$:
\begin{align*}
Q[\mathbf{C}]=P_{\mathbf{v}\backslash\mathbf{c}}(\mathbf{c})=\sum_u \prod_{\{i|V_i\in\mathbf{C}\}} P(v_i|pa_i,u_i)P(u)
\end{align*}

\vspace{-1em}
The significance of c-components and their decomposition is evident from  \cite[Lemmas~10, 11]{tian2002studies}, which are the basis of Tian's identification algorithm.

\vspace{-0.1in}

\section{REVISIT IDENTIFICATION IN DAGS}
\label{sec:dagrevisited}
We revisit the identification results in DAGs, focusing on Tian's algorithm \citep{tian2002studies}. Our goal here is to have an amenable algorithm that allows the incorporation of the structural uncertainties arising 
in the equivalence class.

Let $\mathcal{D}_\mathbf{A}$ denote the (induced) subgraph of a DAG $\mathcal{D}(\mathbf{V},\mathbf{L})$ over $\mathbf{A}\subseteq\mathbf{V}$ and the latent parents of $\mathbf{A}$ (i.e. $\texttt{Pa}(\mathbf{A})\cap\mathbf{L}$).
The original algorithm (Alg.~5 in \citep{tian2002studies}) alternately applies Lemmas 10 and 11 in \citep{tian2002studies} until a solution is derived or a failure condition is triggered. We rewrite this algorithm with a more local, atomic criterion based on the following results.

\begin{defn}[Composite C-Component]
Given a DAG that decomposes into c-components $S_1,\dots,S_k$, $k\geq 1$, 
a composite c-component is the union of one or more of these c-components.
\label{def:composite-ccomp}
\end{defn}

\begin{lemma}
Given a DAG $\mathcal{D}(\mathbf{V},\mathbf{L})$, $\mathbf{X}\subset\mathbf{T}\subseteq\mathbf{V}$, and $P_{\mathbf{v}\setminus\mathbf{t}}$ the interventional distribution of $\mathbf{V}\setminus\mathbf{T}$ on $\mathbf{T}$.
Let $S^\mathbf{X}$ denote a composite c-component containing $\mathbf{X}$ in $\mathcal{D}_\mathbf{T}$.
If $\mathbf{X}$ is a descendant set in $\mathcal{D}_{S^\mathbf{X}}$, then $Q[\mathbf{T}\setminus \mathbf{X}]$ is identifiable and given by
\begin{align}
Q[\mathbf{T}\setminus \mathbf{X}] = \frac{P_{\mathbf{v}\setminus\mathbf{t}}}{Q[S^\mathbf{X}]}\times \sum_\mathbf{x} Q[S^\mathbf{X}]
\label{eq:subsetiddag}
\end{align}
\label{lem:subsetiddag}
\end{lemma}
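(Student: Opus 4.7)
The plan is to chain three standard manipulations: the c-component factorization of \citet{tian2002studies}, his ancestral-set summation rule, and the observation that deleting $\mathbf{X}$ leaves the bidirected adjacencies crossing the $S^\mathbf{X}$ boundary untouched. Throughout, recall that $Q[\mathbf{T}]=P_{\mathbf{v}\setminus\mathbf{t}}$.

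First, because $S^\mathbf{X}$ is a composite c-component of $\mathcal{D}_\mathbf{T}$, it is by Definition~\ref{def:composite-ccomp} a union of entire c-components of $\mathcal{D}_\mathbf{T}$; no bidirected edge of $\mathcal{D}_\mathbf{T}$ then crosses between $S^\mathbf{X}$ and $\mathbf{T}\setminus S^\mathbf{X}$. Tian's c-component lemma gives the factorization $Q[\mathbf{T}]=Q[S^\mathbf{X}]\cdot Q[\mathbf{T}\setminus S^\mathbf{X}]$ together with the identifiability of each factor from $Q[\mathbf{T}]$. Solving for the second factor gives $Q[\mathbf{T}\setminus S^\mathbf{X}]=P_{\mathbf{v}\setminus\mathbf{t}}/Q[S^\mathbf{X}]$. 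Second, the hypothesis that $\mathbf{X}$ is a descendant set in $\mathcal{D}_{S^\mathbf{X}}$ means, by complementation, that $S^\mathbf{X}\setminus\mathbf{X}$ is ancestral in $\mathcal{D}_{S^\mathbf{X}}$. Tian's ancestral-set lemma applied to the pair $(S^\mathbf{X},S^\mathbf{X}\setminus\mathbf{X})$ then yields $\sum_{\mathbf{x}} Q[S^\mathbf{X}]=Q[S^\mathbf{X}\setminus\mathbf{X}]$.

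Third, I would establish the factorization $Q[\mathbf{T}\setminus\mathbf{X}]=Q[\mathbf{T}\setminus S^\mathbf{X}]\cdot Q[S^\mathbf{X}\setminus\mathbf{X}]$. Since bidirected edges are determined by shared latent parents in the underlying DAG, restricting to a subset of nodes cannot introduce new bidirected adjacencies; in particular, no bidirected edge of $\mathcal{D}_{\mathbf{T}\setminus\mathbf{X}}$ can join $\mathbf{T}\setminus S^\mathbf{X}$ to $S^\mathbf{X}\setminus\mathbf{X}$. Hence the c-components of $\mathcal{D}_{\mathbf{T}\setminus\mathbf{X}}$ split cleanly into those contained in $\mathbf{T}\setminus S^\mathbf{X}$ (coinciding with the c-components of $\mathcal{D}_{\mathbf{T}\setminus S^\mathbf{X}}$) and those contained in $S^\mathbf{X}\setminus\mathbf{X}$ (coinciding with the c-components of $\mathcal{D}_{S^\mathbf{X}\setminus\mathbf{X}}$). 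Applying Tian's c-component lemma to each of the three sets and regrouping the product yields the claimed factorization.

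Concatenating the three equalities produces (\ref{eq:subsetiddag}), and identifiability follows because the right-hand side depends only on $P_{\mathbf{v}\setminus\mathbf{t}}$ and $Q[S^\mathbf{X}]$, both identifiable from $P_{\mathbf{v}\setminus\mathbf{t}}$ via Tian's c-component lemma. I expect the third step to be the main obstacle: although conceptually clear, it requires careful bookkeeping to show that the c-component structure of $\mathcal{D}_{\mathbf{T}\setminus\mathbf{X}}$ respects the partition $(\mathbf{T}\setminus S^\mathbf{X})\sqcup(S^\mathbf{X}\setminus\mathbf{X})$ and that the resulting product of c-component $Q$'s aligns with the two ``halves'' $Q[\mathbf{T}\setminus S^\mathbf{X}]$ and $Q[S^\mathbf{X}\setminus\mathbf{X}]$; the other two steps are essentially direct invocations of existing lemmas.
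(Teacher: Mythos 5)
Your proposal is correct and follows essentially the same route as the paper's proof: apply Tian's Lemma~11 to factor $Q[\mathbf{T}]$ into $Q[S^\mathbf{X}]\cdot Q[\mathbf{T}\setminus S^\mathbf{X}]$, then Lemma~10 to get $Q[S^\mathbf{X}\setminus\mathbf{X}]=\sum_{\mathbf{x}}Q[S^\mathbf{X}]$ from the descendant-set hypothesis, and recombine. Your third step merely makes explicit the factorization $Q[\mathbf{T}\setminus\mathbf{X}]=Q[\mathbf{T}\setminus S^\mathbf{X}]\cdot Q[S^\mathbf{X}\setminus\mathbf{X}]$ that the paper invokes implicitly in its final line, so if anything your write-up is slightly more complete.
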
 
\begin{proof}
 By \citep[Lemma 11]{tian2002studies}, $Q[\mathbf{T}]$ decomposes as follows.
\begin{align*}
Q[\mathbf{T}] = Q[\mathbf{T}\setminus S^\mathbf{X}]\times Q[S^\mathbf{X}] = \frac{Q[\mathbf{T}]}{Q[S^\mathbf{X}]}\times Q[S^\mathbf{X}]
\end{align*}

$Q[S^\mathbf{X}]$ is computable from $P_{\mathbf{v}\setminus\mathbf{t}}$ using Lemma 11 in \citep{tian2002studies}, and $Q[S^\mathbf{X}\setminus\mathbf{X}]$ is computable from $Q[S^\mathbf{X}]$ using \citep[Lemma 10]{tian2002studies} as $\mathbf{X}$ is a descendant set in $\mathcal{D}_{S^\mathbf{X}}$.
Therefore,
\begin{align*}
Q[\mathbf{T}\setminus\mathbf{X}]	&= \frac{P_{\mathbf{v}\setminus\mathbf{t}}}{Q[S^\mathbf{X}]}\cdot Q[S^\mathbf{X}\setminus\mathbf{X}] = \frac{P_{\mathbf{v}\setminus\mathbf{t}}}{Q[S^\mathbf{X}]}\cdot \sum_\mathbf{x} Q[S^\mathbf{X}]
\end{align*}
\end{proof}

The next result follows directly when $\mathbf{X}$ is a singleton.

\begin{corollary}
Given a DAG $\mathcal{D}(\mathbf{V},\mathbf{L})$, $X\in\mathbf{T}\subseteq\mathbf{V}$, and $P_{\mathbf{v}\setminus\mathbf{t}}$.
If $X$ is not in the same c-component with a child in $\mathcal{D}_\mathbf{T}$, then $Q[\mathbf{T}\setminus \{X\}]$ is identifiable and given by
\begin{align}
Q[\mathbf{T}\setminus \{X\}] = \frac{P_{\mathbf{v}\setminus\mathbf{t}}}{Q[S^X]}\times \sum_x Q[S^X]
\end{align}
where $S^X$ is the c-component of $X$ in $\mathcal{D}_\mathbf{T}$.
\label{cor:singleiddag}
\end{corollary}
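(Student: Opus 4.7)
The plan is to apply Lemma~\ref{lem:subsetiddag} directly, specializing $\mathbf{X}$ to the singleton $\{X\}$ and taking the composite c-component $S^{\mathbf{X}}$ to be $S^X$, the c-component of $X$ in $\mathcal{D}_\mathbf{T}$. A single c-component trivially qualifies as a composite c-component under Definition~\ref{def:composite-ccomp} (the union of one c-component), so this choice of $S^{\mathbf{X}}$ is admissible. The interventional distribution $P_{\mathbf{v}\setminus\mathbf{t}}$ is already the input to both the lemma and the corollary, so nothing is assumed extra there.

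The only hypothesis of Lemma~\ref{lem:subsetiddag} that needs verification is that $\{X\}$ is a descendant set in $\mathcal{D}_{S^X}$, i.e., that no node of $S^X \setminus \{X\}$ is a descendant of $X$ inside $\mathcal{D}_{S^X}$. I would argue this by contraposition. Suppose there were some $Z \in S^X \setminus \{X\}$ that is a descendant of $X$ in $\mathcal{D}_{S^X}$; then a directed path $X \rightarrow Y_1 \rightarrow \cdots \rightarrow Z$ would lie entirely inside $S^X$, and in particular its first internal node $Y_1$ would be a child of $X$ in $\mathcal{D}_\mathbf{T}$ that belongs to $S^X$. That contradicts the standing hypothesis that $X$ shares no c-component in $\mathcal{D}_\mathbf{T}$ with any of its children. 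Hence $X$ has no descendants in $\mathcal{D}_{S^X}$ other than itself, and $\{X\}$ is a descendant set as required.

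With the descendant-set condition established, the conclusion is immediate: substituting $\mathbf{X} = \{X\}$, $\mathbf{x} = x$, and $S^{\mathbf{X}} = S^X$ into \eqref{eq:subsetiddag} of Lemma~\ref{lem:subsetiddag} yields exactly the formula stated in the corollary. I do not expect any real obstacle; the entire proof is a one-line specialization of Lemma~\ref{lem:subsetiddag} together with the short observation above that any descendant of $X$ inside its own c-component would force a child of $X$ to lie in $S^X$.
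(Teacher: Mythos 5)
Your proposal is correct and matches the paper's intent: the paper states the corollary as a direct specialization of Lemma~\ref{lem:subsetiddag} to a singleton $\mathbf{X}$ and gives no further argument. Your explicit verification that the no-child-in-$S^X$ hypothesis forces $\{X\}$ to be a descendant set in $\mathcal{D}_{S^X}$ (via the first node on any offending directed path) is exactly the detail the paper leaves implicit, and it is sound.
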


The significance of Corol.~\ref{cor:singleiddag} stems from the fact that it can be used to rewrite the identification algorithm in a step-wise fashion, which is shown in
Algorithm~\ref{alg:iddag}. The same is equivalent to the original algorithm since neither one of Lemmas 10 nor 11 in \citep{tian2002studies} is applicable whenever Corol.~\ref{cor:singleiddag} is not applicable, which is shown by Lemmas~\ref{lem:ancs-single} and \ref{lem:decomp-single}.
This result may not be surprising since Corol.~\ref{cor:singleiddag} follows from the application of these lemmas.

\begin{algorithm}[t]
\Input{two disjoint sets $\mathbf{X},\mathbf{Y}\subset\mathbf{V}$}
\Output{Expression for $P_\mathbf{x}(\mathbf{y})$ or \texttt{FAIL}}
\begin{enumerate}
\item Let $\mathbf{D}=\texttt{An}(\mathbf{Y})_{\mathcal{G}_{\mathbf{V}\setminus\mathbf{X}}}$
\item Let the c-components of $\mathcal{G}_{\mathbf{D}}$ be $\mathbf{D}_{i}$, $i=1,\dots,k$
\item $P_\mathbf{x}(\mathbf{y})=\sum_{\mathbf{d}\setminus\mathbf{y}}\prod_i \texttt{Identify}(\mathbf{D}_{i},\mathbf{V},P)$
\end{enumerate}
\SetKwFunction{FIdentify}{Identify}
  \SetKwProg{Fn}{Function}{:}{}
  \Fn{\FIdentify{$\mathbf{C}$, $\mathbf{T}$, $Q=Q[\mathbf{T}]$}}{
\If{$\mathbf{C}=\mathbf{T}$}{
    \Return $Q[\mathbf{T}]$\;
}

\relsize{-1}{\tcc{Let $S^B$ be the c-component of $\{B\}$ in $\mathcal{G}_\mathbf{T}$}}\normalsize
\uIf{$\exists B\in\mathbf{T}\setminus\mathbf{C}$ such that $S^B\cap \texttt{Ch}(B)=\emptyset$}{
	Compute $Q[\mathbf{T}\setminus\{B\}]$ from $Q$; \relsize{-1}{\tcp{Corollary~\ref{cor:singleiddag}}}\normalsize
	\Return $\texttt{Identify}(\mathbf{C},\mathbf{T}\setminus\{B\},Q[\mathbf{T}\setminus\{B\}])$\;
}
\Else{
	\Throw \texttt{FAIL}\;
}
  }
\caption{\label{alg:iddag}$\mathbf{ID}(\mathbf{x},\mathbf{y})$ given DAG $\mathcal{G}$}
\end{algorithm}

\begin{lemma}
Given a DAG $\mathcal{D}(\mathbf{V},\mathbf{L})$, 
$\mathbf{C}\subset\mathbf{T}\subseteq\mathbf{V}$.
If $\mathbf{A}=\texttt{An}(\mathbf{C})_{\mathcal{D}_\mathbf{T}}\neq \mathbf{T}$, then there exist some node $X\in\mathbf{T}\setminus\mathbf{A}$ such that $X$ is not in the same c-component with any child in $\mathcal{D}_\mathbf{T}$.
\label{lem:ancs-single}
\end{lemma}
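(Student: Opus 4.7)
The plan is to produce the witness $X$ as a suitably chosen ``sink'' in the induced subgraph on $\mathbf{T}\setminus\mathbf{A}$. The starting observation is that $\mathbf{A}=\texttt{An}(\mathbf{C})_{\mathcal{D}_\mathbf{T}}$ is, by definition, ancestral in $\mathcal{D}_\mathbf{T}$: every possible ancestor of $\mathbf{C}$ that still lies in $\mathbf{T}$ is already collected in $\mathbf{A}$. Since $\mathbf{A}\neq\mathbf{T}$, the set $\mathbf{T}\setminus\mathbf{A}$ is nonempty, so there is at least a candidate to work with.

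First I would rule out any arrow from $\mathbf{T}\setminus\mathbf{A}$ into $\mathbf{A}$ in $\mathcal{D}_\mathbf{T}$. If some $X\in\mathbf{T}\setminus\mathbf{A}$ had a child $Y\in\mathbf{A}$ in $\mathcal{D}_\mathbf{T}$, then $X$ would be an ancestor of some element of $\mathbf{C}$ (through $Y$) while still belonging to $\mathbf{T}$, contradicting $X\notin\mathbf{A}=\texttt{An}(\mathbf{C})_{\mathcal{D}_\mathbf{T}}$. Consequently, every child (in $\mathcal{D}_\mathbf{T}$) of every node in $\mathbf{T}\setminus\mathbf{A}$ also lies in $\mathbf{T}\setminus\mathbf{A}$.

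Next I would fix a topological ordering of the observed nodes of $\mathcal{D}_\mathbf{T}$ (which exists because the underlying SCM is recursive) and let $X$ be the last node in this ordering that belongs to $\mathbf{T}\setminus\mathbf{A}$. Any child of $X$ in $\mathcal{D}_\mathbf{T}$ must appear strictly later than $X$ in the ordering; by the choice of $X$, such a later node cannot sit in $\mathbf{T}\setminus\mathbf{A}$, and by the previous paragraph it cannot sit in $\mathbf{A}$ either. Hence $X$ has no children at all in $\mathcal{D}_\mathbf{T}$, and so trivially it shares no c-component with a child. This $X$ witnesses the claim.

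I do not expect any genuine obstacle; the content of the lemma is an almost structural bookkeeping fact about ancestral sets. The only point that demands a moment's care is making sure one works with the induced subgraph $\mathcal{D}_\mathbf{T}$ consistently, so that ``child'' and ``ancestor'' refer to relations within $\mathbf{T}$ (latent parents are irrelevant for the argument since c-components are defined between observed nodes, and the topological ordering of observed nodes in $\mathcal{D}_\mathbf{T}$ inherits from that of $\mathcal{D}$).
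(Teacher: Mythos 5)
Your proof is correct and follows essentially the same route as the paper's: both arguments note that $\mathbf{T}\setminus\mathbf{A}$ is nonempty, that none of its nodes can have a child in $\mathbf{A}$ (since $\mathbf{A}$ is ancestrally closed toward $\mathbf{C}$ in $\mathcal{D}_\mathbf{T}$), and then invoke acyclicity to extract a node of $\mathbf{T}\setminus\mathbf{A}$ with no children in $\mathcal{D}_\mathbf{T}$, which vacuously satisfies the c-component condition. You merely make explicit, via the topological ordering, the step the paper compresses into ``since the graph is acyclic, at least one node of $\mathbf{T}\setminus\mathbf{A}$ is with no children.''
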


\begin{proof}
If $\mathbf{A}\neq \mathbf{T}$, then $\mathbf{T}\setminus\mathbf{A}$ is a non-empty set where none of the nodes is an ancestor of $\mathbf{A}$.
Since the graph is acyclic, then at least one node of $\mathbf{T}\setminus\mathbf{A}$  is with no children.
Hence, the above conclusion follows.
\end{proof}

\begin{lemma}
Given a DAG $\mathcal{D}(\mathbf{V},\mathbf{L})$, 
$\mathbf{C}\subset\mathbf{T}\subseteq\mathbf{V}$,
and assume $\mathcal{D}_\mathbf{C}$ is a single c-component.
If $\mathcal{D}_\mathbf{T}$ partitions into c-components $\mathbf{S}_1\dots\mathbf{S}_k$, where $k>1$, then there exists some node $X\in\mathbf{S}_i$ such that $\mathbf{C}\not\subseteq\mathbf{S}_i$ and $X$ is not in the same c-component with any child in $\mathcal{D}_\mathbf{T}$.
\label{lem:decomp-single}
\end{lemma}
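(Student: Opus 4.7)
The plan is to first observe that the entire set $\mathbf{C}$ must lie inside a single c-component of $\mathcal{D}_\mathbf{T}$. The reason is that a bi-directed path between two nodes of $\mathbf{C}$ in $\mathcal{D}_\mathbf{C}$ is made up of bi-directed treks through latent parents of $\mathbf{C}$ and intermediate observed nodes drawn from $\mathbf{C}$ itself; since $\mathbf{C}\subseteq\mathbf{T}$, every such trek survives verbatim in $\mathcal{D}_\mathbf{T}$ (the intermediate nodes are still observed and the latent endpoints are still latent parents of $\mathbf{T}$). Hence pairs of $\mathbf{C}$-nodes that are bi-directedly connected in $\mathcal{D}_\mathbf{C}$ remain so in $\mathcal{D}_\mathbf{T}$, and because $\mathcal{D}_\mathbf{C}$ is by hypothesis a single c-component, there is a unique index $i_0$ with $\mathbf{C}\subseteq\mathbf{S}_{i_0}$.

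Because $k>1$, I can pick any index $j\neq i_0$. The c-components $\mathbf{S}_{i_0}$ and $\mathbf{S}_j$ are disjoint, so $\mathbf{S}_j\cap\mathbf{C}=\emptyset$ and in particular $\mathbf{C}\not\subseteq\mathbf{S}_j$, which already gives the first half of the conclusion. For the second half, I would restrict $\mathcal{D}_\mathbf{T}$ to $\mathbf{S}_j$ and exploit acyclicity: this restricted graph is a DAG, so it contains at least one sink $X$, meaning a node with no directed child lying inside $\mathbf{S}_j$. Any child $Y$ of $X$ in the ambient graph $\mathcal{D}_\mathbf{T}$ belongs to $\mathbf{T}$ and hence to exactly one of the $\mathbf{S}_l$'s; by the choice of $X$ as a sink of $\mathbf{S}_j$, we must have $l\neq j$, so $Y$ lies in a different c-component than $X$.

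The only step that requires any care is the first one, since the induced subgraph convention bundles in the latent parents of the current observed set, and one must verify that the bi-directed connectivity genuinely transfers from $\mathcal{D}_\mathbf{C}$ to $\mathcal{D}_\mathbf{T}$. Once the containment $\mathbf{C}\subseteq\mathbf{S}_{i_0}$ is in hand, the remainder is a routine topological-sink argument, very much in the same spirit as Lemma~\ref{lem:ancs-single}, and produces the desired $X$ immediately.
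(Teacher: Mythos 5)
Your proof is correct and follows essentially the same route as the paper's: pick a c-component of $\mathcal{D}_\mathbf{T}$ not containing $\mathbf{C}$ and take a sink of it, which then has no children in its own c-component of $\mathcal{D}_\mathbf{T}$. You additionally spell out why $\mathbf{C}$ sits inside a single c-component of $\mathcal{D}_\mathbf{T}$ (bi-directed treks in $\mathcal{D}_\mathbf{C}$ survive in $\mathcal{D}_\mathbf{T}$), a step the paper's terse proof leaves implicit; this is a welcome clarification but not a different argument.
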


\begin{proof}
Subgraph $\mathcal{D}_{\mathbf{S}_i}$ is acyclic, so there must exist some node ($X$) that doesn't have any children in $\mathcal{D}_{\mathbf{S}_i}$.
Since $\mathbf{S}_i$ is one of the c-components in $\mathcal{D}_\mathbf{T}$, then $X$ is not in the same c-component with any of its children in $\mathcal{D}_\mathbf{T}$.
\end{proof}

The revised algorithm requires checking an atomic criterion at every instance of the recursive routine $\texttt{Identify}$.
This might not be crucial when the precise causal diagram is known and induced subgraphs preserve a complete graphical characterization of the c-components and the ancestral relations between the nodes.
The latter, unfortunately, doesn't hold when the model is an equivalence class represented by a PAG.\footnote{
We thank a reviewer for bringing to our attention a similar formulation of Alg. 1  ~\citep[Thm.~60]{richardson2017nested}. 
}
\begin{figure*}[t]
\centering
\begin{subfigure}{0.5\columnwidth}
\begin{tikzpicture}
\tikzset{vertex/.style = {shape=circle,draw,minimum size=1.5em}}
\tikzset{edge/.style = {->,> = latex'}}

\node[below] (v1) at (0,0) {$V_1$};
\node[below] (v2) at (0,1) {$V_2$};
\node[below] (x) at (1.5,0) {X};
\node[below] (v4) at (3,1) {$V_4$};

\draw[shorten <= 2pt,edge] (v1) to node[at start]{$\circ$} (x);
\draw[shorten <= 2pt,edge] (v2) to node[at start]{$\circ$} (x);
\draw[edge] (x) to node[auto=left] {\small v} (v4);

\end{tikzpicture}
\caption{\label{fig:subpag}Subgraph of Fig.~\ref{fig:intropag}.}
\end{subfigure}%
\begin{subfigure}{0.5\columnwidth}
\begin{tikzpicture}
\tikzset{vertex/.style = {shape=circle,draw,minimum size=1.5em}}
\tikzset{edge/.style = {->,> = latex'}}

\node[below] (v1) at (0,0) {$V_1$};
\node[below] (v2) at (0,1) {$V_2$};
\node[below] (x) at (1.5,0) {$X$};
\node[below] (v4) at (3,1) {$V_4$};

\draw[edge] (v1) to (x);
\draw[edge] (v2) to (x);
\draw[latex'-latex',dashed] (v2) to [bend left=50]  (x);

\end{tikzpicture}
\caption{\label{fig:subdag1}Subgraph of Fig.~\ref{fig:introdag}.}
\end{subfigure}%
\begin{subfigure}{0.5\columnwidth}
\begin{tikzpicture}
\tikzset{vertex/.style = {shape=circle,draw,minimum size=1.5em}}
\tikzset{edge/.style = {->,> = latex'}}

\node[below] (v1) at (0,0) {$V_1$};
\node[below] (v2) at (0,1) {$V_2$};
\node[below] (x) at (1.5,0) {$X$};
\node[below] (v3) at (3,0) {$V_3$};
\node[below] (v4) at (3,1) {$V_4$};

\draw[latex'-latex',dashed] (v1) to (x);
\draw[latex'-latex',dashed] (v2) to (x);
\draw[edge] (x) to (v3);
\draw[edge] (x) to (v4);
\draw[edge] (v3) to (v4);
\draw[latex'-latex',dashed] (v3) to [bend right=60] (v4);

\end{tikzpicture}
\caption{\label{fig:eqdag2}Equivalent DAG to Fig.~\ref{fig:introdag}.}
\end{subfigure}%
\begin{subfigure}{0.5\columnwidth}
\begin{tikzpicture}
\tikzset{vertex/.style = {shape=circle,draw,minimum size=1.5em}}
\tikzset{edge/.style = {->,> = latex'}}

\node[below] (v1) at (0,0) {$V_1$};
\node[below] (v2) at (0,1) {$V_2$};
\node[below] (x) at (1.5,0) {$X$};
\node[below] (v4) at (3,1) {$V_4$};

\draw[latex'-latex',dashed] (v1) to (x);
\draw[latex'-latex',dashed] (v2) to (x);
\draw[edge] (x) to (v4);

\end{tikzpicture}
\caption{\label{fig:subdag2}Subgraph of Fig.~\ref{fig:eqdag2}}
\end{subfigure}
\caption{\label{fig:subpagex}Example for properties discussed in Section~\ref{sec:subpag}}
\end{figure*}
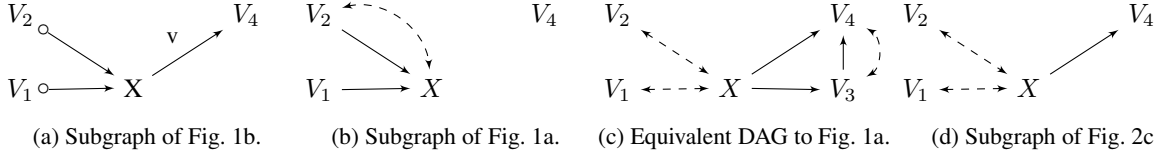

\section{PAG-SUBGRAPH PROPERTIES}
\label{sec:subpag}
Evidently, induced subgraphs of the original causal model play a critical role in identification (cf Alg.~\ref{alg:iddag}). It is natural to expect that in the generalized setting we study here, induced subgraphs of the given PAG will also play an important role. An immediate challenge, however, is that a subgraph of a PAG $\mathcal{P}$ over $\mathbf{V}$ induced by $\mathbf{A}\subseteq\mathbf{V}$ is, in general, not a PAG that represents a full Markov equivalence class. In particular, if $\mathcal{D}(\mathbf{V},\mathbf{L})$ is a DAG in the equivalence class represented by $\mathcal{P}$, $\mathcal{P}_\mathbf{A}$ is in general not the PAG that represents the equivalence class of $\mathcal{D}_\mathbf{A}$. 
To witness, let $\mathcal{D}$ and $\mathcal{P}$ denote the DAG and the corresponding PAG in Figure~\ref{fig:introex}, respectively, and let $\mathbf{A}=\{V_1,V_2,X,V_4\}$.
The induced subgraph of $\mathcal{P}$ over $\mathbf{A}$ (Fig.~\ref{fig:subpag}) does not represent the equivalence class of the corresponding induced subgraph of $\mathcal{D}$ (Fig.~\ref{fig:subdag1}).
Despite this subtlety, we establish a few facts below showing that for any $\mathbf{A}\subseteq \mathbf{V}$ 
and any DAG $\mathcal{D}$ in the equivalence class represented by $\mathcal{P}$, some information about $\mathcal{D}_\mathbf{A}$, which is particularly relevant to identification, can be read off from $\mathcal{P}_\mathbf{A}$.
 

\begin{proposition}
Let $\mathcal{P}$ be a PAG over $\mathbf{V}$, and $\mathcal{D}(\mathbf{V},\mathbf{L})$ be any DAG in the equivalence class represented by $\mathcal{P}$.
Let $X\neq Y$ be two nodes in $\mathbf{A}\subseteq\mathbf{V}$.
If $X$ is an ancestor of $Y$ in $\mathcal{D}_\mathbf{A}$, then $X$ is a possible ancestor of $Y$ in $\mathcal{P}_\mathbf{A}$.
\label{prop:ancsubpag}
\end{proposition}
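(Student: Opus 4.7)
The plan is to lift a witness directed path in $\mathcal{D}_\mathbf{A}$ first into the MAG associated with $\mathcal{D}$ and then into the PAG, and argue that its image in $\mathcal{P}_\mathbf{A}$ is potentially directed from $X$ to $Y$.

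I would fix a directed path $\pi: X \to W_1 \to \cdots \to W_k \to Y$ in $\mathcal{D}_\mathbf{A}$. Every intermediate vertex $W_i$ is observed and lies in $\mathbf{A}$: the only latent nodes contained in $\mathcal{D}_\mathbf{A}$ are latent parents of $\mathbf{A}$, which by the SCM semantics have no in-edges and hence cannot appear as a non-endpoint of a directed path. Next I would pass to the MAG $\mathcal{M}_\mathcal{D}$ associated with $\mathcal{D}$, which belongs to the equivalence class that $\mathcal{P}$ represents. For each consecutive pair $(W_{i-1}, W_i)$ on $\pi$ (with $W_0 = X$ and $W_{k+1} = Y$), the edge $W_{i-1} \to W_i$ in $\mathcal{D}$ is a trivial inducing path between two observed nodes, so $W_{i-1}$ and $W_i$ are adjacent in $\mathcal{M}_\mathcal{D}$; since $W_{i-1}$ is moreover an ancestor of $W_i$ in $\mathcal{D}$, the ancestrality of $\mathcal{M}_\mathcal{D}$ forces the edge there to carry a tail at $W_{i-1}$ and an arrowhead at $W_i$, as any other mark pattern would produce a directed or almost directed cycle. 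Thus $\pi$ lifts to a directed path in $\mathcal{M}_\mathcal{D}$ on the same vertices.

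Finally I would translate back to $\mathcal{P}$. Because $\mathcal{M}_\mathcal{D}$ lies in the equivalence class, its adjacencies are inherited by $\mathcal{P}$; since both endpoints of each such edge are in $\mathbf{A}$, the edge also survives in $\mathcal{P}_\mathbf{A}$. The presence of a tail at $W_{i-1}$ in $\mathcal{M}_\mathcal{D}$ precludes the mark at $W_{i-1}$ in $\mathcal{P}$ from being an arrowhead, since PAG arrowheads must be shared by every MAG of the class. Hence the image of $\pi$ in $\mathcal{P}_\mathbf{A}$ carries only tail or circle marks at the upstream end of each edge, which is exactly the definition of a potentially directed path from $X$ to $Y$; so $X$ is a possible ancestor of $Y$ in $\mathcal{P}_\mathbf{A}$.

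The main obstacle, modest as it is, is the bookkeeping to verify (i) that the intermediate vertices of $\pi$ are observed and remain in $\mathbf{A}$, so that adjacencies genuinely survive restriction to $\mathcal{P}_\mathbf{A}$, and (ii) that a tail in a single MAG of the class is enough to rule out the corresponding PAG mark being an arrowhead. Both points are direct consequences of the definitions of MAGs and PAGs and of $\mathcal{P}$ representing the equivalence class containing $\mathcal{M}_\mathcal{D}$.
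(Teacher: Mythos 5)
Your proposal is correct and follows essentially the same route as the paper's proof: take the witnessing directed path in $\mathcal{D}_\mathbf{A}$ (all of whose non-endpoints are observed and in $\mathbf{A}$), lift it to a directed path in the associated MAG, and observe that the corresponding path in $\mathcal{P}_\mathbf{A}$ carries no arrowhead at the upstream end of any edge, hence is potentially directed. The paper states these steps more tersely; your added bookkeeping (latent nodes being roots, and a single MAG tail blocking a PAG arrowhead) is exactly the justification left implicit there.
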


\begin{proof}
If $X$ is an ancestor of $Y$ in $\mathcal{D}_\mathbf{A}$, then there is a path $p$ in $\mathcal{D}_\mathbf{A}$ composed of nodes $\langle X=V_0,\dots,Y=V_m\rangle$, $m\geq 1$ such that $V_i\in\mathbf{A}$ and $V_i\rightarrow V_{i+1}$, $0\leq i<m$.
Path $p$ is obviously also present in $\mathcal{D}$, and consequently the corresponding MAG $\mathcal{M}$.
Hence, $p$ corresponds to a possibly directed path in $\mathcal{P}$.
Since all the nodes along $p$ are in $\mathbf{A}$, then $p$ is present in $\mathcal{P}_\mathbf{A}$ and so $X$ is a possible ancestor of $Y$ in $\mathcal{P}_\mathbf{A}$.
\end{proof}

This simple proposition guarantees that possible-ancestral relationship in $\mathcal{P}_\mathbf{A}$ subsumes ancestral relationship in $\mathcal{D}_\mathbf{A}$ for every $\mathcal{D}$ in the class represented by $\mathcal{P}$. This is illustrated by $\mathcal{D}_\mathbf{A}$ and $\mathcal{P}_\mathbf{A}$ in Figures \ref{fig:subpag} and \ref{fig:subdag1}.

Given an induced subgraph of a PAG, $\mathcal{P}_\mathbf{A}$, a directed edge $X\rightarrow Y$ in $\mathcal{P}_\mathbf{A}$ is said to be {\em visible} if for every DAG $\mathcal{D}$ in the class represented by $\mathcal{P}$, there is no inducing path in $\mathcal{D}_\mathbf{A}$ between $X$ and $Y$ relative to the latent nodes in $\mathcal{D}_\mathbf{A}$ that is into $X$.

\begin{lemma}
Let $\mathcal{P}$ be a PAG over $\mathbf{V}$, and $\mathcal{P}_\mathbf{A}$ be an induced subgraph of $\mathcal{P}$ over $\mathbf{A}\subseteq \mathbf{V}$. For every $X\rightarrow Y$ in $\mathcal{P}_\mathbf{A}$, if it is visible in $\mathcal{P}$, then it remains visible in $\mathcal{P}_\mathbf{A}$.
\label{lem:subpagvis}
\end{lemma}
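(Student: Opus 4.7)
The plan is a proof by contradiction using the definitions of visibility directly. Suppose $X\rightarrow Y$ is visible in $\mathcal{P}$ but \emph{not} visible in $\mathcal{P}_\mathbf{A}$. Unpacking the latter, there exists some DAG $\mathcal{D}(\mathbf{V},\mathbf{L})$ in the equivalence class represented by $\mathcal{P}$ and an inducing path $p$ in $\mathcal{D}_\mathbf{A}$ between $X$ and $Y$ that is into $X$, relative to the latent nodes of $\mathcal{D}_\mathbf{A}$, namely $\mathbf{L}_\mathbf{A} := \texttt{Pa}(\mathbf{A})\cap\mathbf{L}$. The goal is to show that the very same $p$ is then an inducing path in $\mathcal{D}$ between $X$ and $Y$ into $X$ relative to $\mathbf{L}$, which contradicts the assumed visibility of $X\rightarrow Y$ in $\mathcal{P}$.

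To verify that $p$ remains an inducing path after passing from $\mathcal{D}_\mathbf{A}$ to $\mathcal{D}$, I would check the two defining conditions. First, every non-endpoint node $V$ on $p$ with $V\notin\mathbf{L}$ must be a collider in $\mathcal{D}$: since the nodes of $\mathcal{D}_\mathbf{A}$ are $\mathbf{A}\cup\mathbf{L}_\mathbf{A}$ and $V\notin\mathbf{L}\supseteq\mathbf{L}_\mathbf{A}$, such a $V$ lies in $\mathbf{A}$, so the inducing-path condition in $\mathcal{D}_\mathbf{A}$ already forces $V$ to be a collider on $p$ there; and because $\mathcal{D}_\mathbf{A}$ inherits its edges and orientations from $\mathcal{D}$, $V$ is still a collider in $\mathcal{D}$. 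Second, every collider on $p$ must be an ancestor of $X$ or $Y$ in $\mathcal{D}$: this is immediate because the inducing-path condition in $\mathcal{D}_\mathbf{A}$ already supplies a directed path in $\mathcal{D}_\mathbf{A}$ witnessing ancestorship, and any directed path in a subgraph survives in the full graph. Finally, the ``into $X$'' condition is a local property of the mark on $p$ adjacent to $X$, which is identical in $\mathcal{D}_\mathbf{A}$ and $\mathcal{D}$.

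The only delicate point, and the one worth being explicit about, is the bookkeeping around the latent sets: the inducing-path constraint relative to $\mathbf{L}_\mathbf{A}$ in $\mathcal{D}_\mathbf{A}$ is \emph{a priori} weaker than the constraint relative to $\mathbf{L}$ in $\mathcal{D}$, because latents in $\mathcal{D}$ that are not parents of $\mathbf{A}$ are no longer available as non-collider waypoints. What saves us is that such nodes do not appear on $p$ at all, since $p$ is a path in $\mathcal{D}_\mathbf{A}$, and every observed node on $p$ lies in $\mathbf{A}\subseteq\mathbf{V}\setminus\mathbf{L}$ and is therefore required to be a collider already in $\mathcal{D}_\mathbf{A}$. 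Once this observation is made, the contradiction is immediate and the lemma follows.
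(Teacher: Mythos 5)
Your proposal is correct and is essentially the paper's own argument: the paper also reduces visibility in $\mathcal{P}_\mathbf{A}$ to the non-existence of an inducing path in $\mathcal{D}_\mathbf{A}$ and observes that any such path would lift to an inducing path in $\mathcal{D}$ relative to $\mathbf{L}$ into $X$, contradicting visibility in $\mathcal{P}$ (the paper states this as a one-line contrapositive, whereas you spell out the collider, ancestorship, and endpoint-mark checks). The extra care you take with the latent-set bookkeeping is a welcome elaboration of a step the paper leaves implicit, not a different route.
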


\begin{proof}
Let $\mathcal{D}(\mathbf{V},\mathbf{L})$ be any causal model in the equivalence class represented by $\mathcal{P}$, and let $X\rightarrow Y$ be a visible edge in $\mathcal{P}$, $X,Y\in\mathbf{A}$.
Then, there is no inducing path between $X$ and $Y$ relative to $\mathbf{L}$ that is into $X$ in $\mathcal{D}$.
It follows that no such inducing path (relative to the latent nodes in $\mathcal{D}_\mathbf{A}$) exists in the subgraph $\mathcal{D}_\mathbf{A}$.
\end{proof}

Visibility is relevant for identification because it implies absence of confounding, which is the major obstacle to identification. Lemma~\ref{lem:subpagvis} shows that an edge in an induced subgraph that is visible in the original PAG also implies absence of confounding in the induced subgraphs.
Interestingly, note that a directed edge $X\rightarrow Y$ in $\mathcal{P}_\mathbf{A}$, visible or not, does not imply that $X$ is an ancestor of $Y$ in $\mathcal{D}_\mathbf{A}$ for every $\mathcal{D}$ in the class represented by $\mathcal{P}$.
For example, $X$ is not an ancestor of $V_4$ in Fig.~\ref{fig:subdag1}, even though $X\rightarrow V_4$ is a visible edge in Fig.~\ref{fig:subpag}.

\begin{defn}[PC-Component]
In a MAG, a PAG, or any of its induced subgraphs, 
two nodes $X$ and $Y$ are in the same possible c-component (pc-component) if there is a path between the two nodes such that (1) all non-endpoint nodes along the path are 
colliders, and (2) none of the edges is visible.
\label{def:pccomp}
\end{defn}

As alluded earlier, a c-component in a causal graph plays a central role in identification.
The following proposition establishes a graphical condition in an induced subgraph $\mathcal{P}_\mathbf{A}$ that is necessary for two nodes being in the same c-component in $\mathcal{D}_\mathbf{A}$ for some DAG $\mathcal{D}$ represented by $\mathcal{P}$.

\begin{proposition}
Let $\mathcal{P}$ be a PAG over $\mathbf{V}$, and $\mathcal{D}(\mathbf{V},\mathbf{L})$ be any DAG in the equivalence class represented by $\mathcal{P}$.
For any $X, Y\in \mathbf{A}\subseteq\mathbf{V}$, 
 if $X$ and $Y$ are in the same c-component in $\mathcal{D}_\mathbf{A}$, then $X$ and $Y$ are in the same pc-component in $\mathcal{P}_\mathbf{A}$.
\label{prop:pccompsubpag}
\end{proposition}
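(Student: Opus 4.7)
\begin{psketch}
My plan is to construct a pc-component path in $\mathcal{P}_\mathbf{A}$ witnessing the claim by lifting a c-component path from $\mathcal{D}_\mathbf{A}$. Unpacking the hypothesis gives a chain $X = W_0, W_1, \dots, W_k = Y$ of nodes in $\mathbf{A}$ together with latents $U_1, \dots, U_k \in \texttt{Pa}(\mathbf{A})\cap\mathbf{L}$ such that each $U_i$ is a parent of both $W_{i-1}$ and $W_i$ in $\mathcal{D}$. The argument is to show that the sequence $W_0, \dots, W_k$, viewed inside $\mathcal{P}_\mathbf{A}$, satisfies both clauses of Definition~\ref{def:pccomp}.

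The first step is to check that the chain is an actual path in $\mathcal{P}_\mathbf{A}$. For each $i$, the trek $W_{i-1} \leftarrow U_i \to W_i$ sits in $\mathcal{D}_\mathbf{A}$ and is an inducing path relative to the latent nodes in $\mathcal{D}_\mathbf{A}$, since its only intermediate vertex $U_i$ is latent. Hence $W_{i-1}$ and $W_i$ are adjacent in the MAG derived from $\mathcal{D}$, and consequently adjacent in $\mathcal{P}$ by the shared-adjacencies clause of the PAG definition; since the endpoints lie in $\mathbf{A}$, the adjacency persists in $\mathcal{P}_\mathbf{A}$.

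The second step is to certify invisibility of every edge on the chain in $\mathcal{P}_\mathbf{A}$. Using the working definition of visibility for induced subgraphs given just before Lemma~\ref{lem:subpagvis}, it suffices to exhibit, for each such edge, a DAG in the class that carries, inside its induced subgraph over $\mathbf{A}$, an inducing path between the two endpoints that is into one of them. Taking the DAG to be $\mathcal{D}$ itself, the trek through $U_i$ is precisely such an inducing path, and is moreover into both $W_{i-1}$ and $W_i$, so neither possible directed orientation of the edge can be visible.

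The main obstacle lies in the third step, which is verifying that every internal node $W_i$ for $1 \le i \le k-1$ is a collider on the chain in the sense required by Definition~\ref{def:pccomp}. My plan is to argue that the mark at $W_i$ on either of its incident chain-edges cannot be a definite tail. Informally, a definite tail at $W_i$ would, by the PAG construction, force the edge to be recorded as an invariant directed edge out of $W_i$, which is in tension with the inducing-path witness of confounding from Step 2 (that path is into $W_i$, so it precludes the kind of unconfounded structure an invariant outward orientation is meant to reflect). Ruling out definite tails on both incident marks at $W_i$ leaves only arrowheads or circles there, which is sufficient for $W_i$ to count as a (possible) collider on the chain. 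Combining the three steps produces a path in $\mathcal{P}_\mathbf{A}$ from $X$ to $Y$ that meets both clauses of the pc-component definition, as required.
\end{psketch}
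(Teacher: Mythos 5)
Your Steps 1 and 2 are sound: each trek $W_{i-1}\leftarrow U_i\rightarrow W_i$ is an inducing path relative to $\mathbf{L}$, so consecutive nodes of the chain are adjacent in the MAG, hence in $\mathcal{P}$ and in $\mathcal{P}_\mathbf{A}$; and since that trek is into both endpoints, neither orientation of the resulting edge can be visible. The gap is in Step 3, and it is not a fixable informality: the claim that a definite tail at $W_i$ is ``in tension'' with the confounding witness is simply false. Edge marks in a MAG record ancestral relations, not absence of confounding --- that is exactly what an \emph{invisible} directed edge is. If $\mathcal{D}$ also contains the directed edge $W_1\rightarrow W_0$ (in addition to $W_0\leftarrow U_1\rightarrow W_1$ and $W_1\leftarrow U_2\rightarrow W_2$), then $W_1$ is an ancestor of $W_0$, the MAG must orient the edge as $W_1\rightarrow W_0$ with a tail at $W_1$, and $W_1$ is not a collider on your chain. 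So the chain $W_0,\dots,W_k$, taken as is, need not satisfy clause (1) of Definition~\ref{def:pccomp}, and no amount of arguing about marks will change that.

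The paper's proof avoids this by not insisting on the original chain: it extracts a \emph{subsequence} $p'$ of $p$ that witnesses pc-membership in the MAG, and then a further subsequence $p^*$ of $p'$ for the PAG. The mechanism you are missing is that when an intermediate $W_i$ fails to be a collider it is because $W_i$ is an ancestor of a neighbor (or endpoint), in which case the concatenation of the two treks through $U_i$ and $U_{i+1}$ is itself an inducing path relative to $\mathbf{L}$ between $W_{i-1}$ and $W_{i+1}$ (its only observed intermediate node, $W_i$, is a collider on it and an ancestor of an endpoint); hence $W_{i-1}$ and $W_{i+1}$ are adjacent and $W_i$ can be dropped, with the new edge still into both endpoints and therefore still invisible. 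A second pass of the same flavor (using facts like Lemma~\ref{lem:cpsubpag}) handles intermediate marks that survive as circles rather than invariant arrowheads when passing from the MAG to $\mathcal{P}$. Without this subsequence argument your construction does not produce a path meeting Definition~\ref{def:pccomp}, so the proof as proposed does not go through.
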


\begin{psketch}
If $X$ and $Y$ are in the same c-component in $\mathcal{D}_\mathbf{A}$, then there is a path $p$ in $\mathcal{D}_\mathbf{A}$ composed of nodes $\langle X=V_0,\dots,Y=V_m\rangle$, $m\geq 1$, such that $V_i\in\mathbf{A}$ and $V_i\leftarrow L_{i,i+1}\rightarrow V_{i+1}$, $0\leq i<m$.
We prove that $X$ and $Y$ are in the same pc-component in $\mathcal{M}$, the MAG of $\mathcal{D}$ over $\mathbf{V}$, due to a path $p'$ over a subsequence of $p$.
We then show that $X$ and $Y$ are in the same pc-component in $\mathcal{P}$, the PAG of $\mathcal{M}$, due to a path $p^*$ over a subsequence of $p'$.
Since all the nodes along $p^*$ are in $\mathbf{A}$, then $p^*$ is present in $\mathcal{P}_\mathbf{A}$,  and so $X$ and $Y$ are in the same pc-component in $\mathcal{P}_\mathbf{A}$.
Due to space constraints, the complete proofs are provided in~\citep{appendix}.
\end{psketch}

This result provides a sufficient condition for {\em not} belonging to the same c-component in any of the relevant causal graphs. In Fig.~\ref{fig:subpag}, for example, $V_1$ and $V_4$ or $X$ and $V_4$ are not in the same pc-component, which implies by Prop.~\ref{prop:pccompsubpag} that they are not in the same c-component in $\mathcal{D}_\mathbf{A}$ for any DAG $\mathcal{D}$ in the equivalence class represented by the PAG in Fig.~\ref{fig:intropag}. 

As a special case of Def.~\ref{def:pccomp}, we define the following notion, which will prove useful later on. 

\begin{defn}[DC-Component]
In a MAG, a PAG, or any of its induced subgraphs, 
two nodes $X$ and $Y$ are in the same definite c-component (dc-component) if they are connected with a bi-directed path, i.e. a path composed solely of bi-directed edges.
\end{defn}

One challenge with the notion of \textit{pc-component} is that it is not transitive as \textit{c-component} is.
 Consider the PAG $V_1\circ\!\!\--\!\!\circ V_2\circ\!\!\--\!\!\circ V_3$.
Here, $V_1$ and $V_2$ are in the same pc-component, $V_2$ and $V_3$ are in the same pc-component, however, $V_1$ and $V_3$ are not in the same pc-component.
Hence, we define a notion that is a transitive closure of the notion of pc-component, which will prove instrumental to our goal.

\begin{defn}[CPC-Component]
Let $\mathcal{P}$ denote a PAG or a corresponding induced subgraph.
Nodes $X$ and $Y$ are in the same composite pc-component in $\mathcal{P}$, denoted \textit{cpc-component}, if there exist a sequence of nodes $\langle X=V_0,\dots,Y=V_m\rangle$, $m\geq 1$, such that $V_i$ and $V_{i+1}$ are in the same pc-component, $0\leq i<m$.
\label{def:cc-component}
\end{defn}

It follows from the above definition that 
 a PAG or an induced subgraph $\mathcal{P}$ can be decomposed into unique sets of cpc-components.
For instance, the cpc-components in Fig.~\ref{fig:subpag} are $\mathbf{S}_1=\{V_1,V_2,X\}$ and $\mathbf{S}_2=\{V_4\}$.
The significance of a \textit{cpc-component} is that it corresponds to a composite c-component in the relevant causal graphs as shown in the following proposition.

\begin{proposition}
Let $\mathcal{P}$ be a PAG over $\mathbf{V}$, $\mathcal{D}(\mathbf{V},\mathbf{L})$ be any DAG in the equivalence class represented by $\mathcal{P}$, and $\mathbf{A}\subseteq\mathbf{V}$.
If $\mathbf{C}\subseteq\mathbf{A}$ is a cpc-component in $\mathcal{P}_\mathbf{A}$, then $\mathbf{C}$ is a composite c-component in $\mathcal{D}_\mathbf{A}$.
\label{prop:cccompsubpag}
\end{proposition}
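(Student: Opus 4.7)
The plan is to derive this statement as a direct corollary of Proposition~\ref{prop:pccompsubpag}, with the cpc-component construction doing the heavy lifting by turning a merely-symmetric pc-component relation into a bona fide equivalence relation. Recall that, by Definition~\ref{def:composite-ccomp}, a composite c-component in $\mathcal{D}_\mathbf{A}$ is simply a union of c-components of $\mathcal{D}_\mathbf{A}$. Hence it suffices to show that for every $X\in\mathbf{C}$, the entire c-component of $X$ in $\mathcal{D}_\mathbf{A}$ lies inside $\mathbf{C}$; then $\mathbf{C}$ is automatically such a union.

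First, I would note that the cpc-component relation of Definition~\ref{def:cc-component} is an equivalence relation on the nodes of $\mathcal{P}_\mathbf{A}$: symmetry is inherited from the symmetry of pc-component membership (Def.~\ref{def:pccomp}), and transitivity is built into the definition by concatenating chains of pc-components. Consequently, the cpc-components partition the nodes of $\mathcal{P}_\mathbf{A}$, and $\mathbf{C}$ is one block of this partition.

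Next, fix an arbitrary $X\in\mathbf{C}$ and let $S^X$ denote the c-component of $X$ in $\mathcal{D}_\mathbf{A}$; note $S^X\subseteq \mathbf{A}$. For any $Y\in S^X$, Proposition~\ref{prop:pccompsubpag} applied to the pair $(X,Y)$ gives that $X$ and $Y$ lie in the same pc-component of $\mathcal{P}_\mathbf{A}$, hence in the same cpc-component. Since cpc-components partition the nodes and $X\in\mathbf{C}$, this forces $Y\in\mathbf{C}$. Therefore $S^X\subseteq\mathbf{C}$ for every $X\in\mathbf{C}$, and $\mathbf{C}=\bigcup_{X\in\mathbf{C}} S^X$ is a union of c-components of $\mathcal{D}_\mathbf{A}$, as required.

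There is no real obstacle here, precisely because Proposition~\ref{prop:pccompsubpag} already packaged the graph-theoretic content (transforming a bi-directed connecting path in $\mathcal{D}_\mathbf{A}$ into a collider/non-visible path in $\mathcal{P}_\mathbf{A}$). The one point worth flagging is that pc-component membership alone is not transitive (as the excerpt notes with the example $V_1\!\circ\!\!\--\!\!\circ V_2\!\circ\!\!\--\!\!\circ V_3$), which is exactly why the cpc-closure is needed in order for a c-component sitting ``across'' two pc-components in $\mathcal{P}_\mathbf{A}$ to still be contained in a single block of the partition.
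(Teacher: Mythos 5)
Your proof is correct and follows essentially the same route as the paper's: both reduce the claim to Proposition~\ref{prop:pccompsubpag} together with the fact that cpc-components form a partition closed under pc-component membership (you phrase it via the direct implication, the paper via the contrapositive, but these are logically identical). No gaps.
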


\begin{proof}
According to Definition~\ref{def:cc-component}, $\mathbf{C}$ includes all the nodes that are in the same pc-component with some node in $\mathbf{C}$ in $\mathcal{P}_\mathbf{A}$.
If follows from the contrapositive of Prop.~\ref{prop:pccompsubpag} that no node outside $\mathbf{C}$ is in the same c-component with any node in $\mathbf{C}$ in $\mathcal{D}_\mathbf{A}$.
Hence, set $\mathbf{C}$ represents a composite c-component in $\mathcal{D}_\mathbf{A}$ by Definition~\ref{def:composite-ccomp}.
\end{proof}


\begin{algorithm}[t]
 \Input{PAG $\mathcal{P}$ over $\mathbf{V}$}
 \Output{PTO over $\mathcal{P}$}
 1- Create singleton buckets $\mathbf{B_i}$ each containing $V_i\in\mathbf{V}$.\linebreak \\[-2ex]
 2- Merge buckets $\mathbf{B_i}$ and $\mathbf{B_j}$ if there is a circle edge between them ($\mathbf{B_i}\ni X\circ\!\!\--\!\!\circ Y\in \mathbf{B_j}$).\linebreak \\[-2ex]
 3- \While{set of buckets ($\mathbf{B}$) is not empty}{
  (i) Extract $\mathbf{B_i}$ with only arrowheads incident on it.\\
  (ii) Remove edges between $\mathbf{B_i}$ and other buckets.
 }\ \\[-2ex]
 4- The partial order is $\mathbf{B_1}<\mathbf{B_2}<\dots<\mathbf{B_m}$ in reverse order of the bucket extraction. Hence, $\mathbf{B_1}$ is the last bucket extracted and $\mathbf{B_m}$ is the first bucket extracted.
 \caption{PTO Algorithm}
\label{alg:pto}
\end{algorithm}

Recall that the algorithm for identification given a DAG uses a topological order over the nodes. Similarly, the algorithm we design for PAGs will depend on some 
(partial) topological order.
Thanks to the possible presence of circle edges ($\circ\!\!\--\!\!\circ$) in a PAG, in general, there may be no complete topological order that is valid for all DAGs in the equivalence class.
Algorithm~\ref{alg:pto} presents a procedure to derive a \textit{partial topological order} over the nodes in a PAG, using buckets of nodes that are connected with circle paths~\citep{jaber18}.
This algorithm remains valid over an induced subgraph of a PAG. To show this, the following lemma is crucial:

\begin{lemma}
Let $\mathcal{P}$ be a PAG over $\mathbf{V}$, and $\mathcal{P}_\mathbf{A}$ be the induced subgraph over $\mathbf{A}\subseteq\mathbf{V}$.
For any three nodes $A$, $B$, $C$, if $A*\!\!\rightarrow B\circ\!\!\--\!\!* C$, then there is an edge between $A$ and $C$ with an arrowhead at $C$, namely, $A*\!\!\rightarrow C$.
Furthermore, if the edge between $A$ and $B$ is $A\rightarrow B$, then the edge between $A$ and $C$ is either $A\rightarrow C$ or $A\circ\!\!\rightarrow C$ (i.e., it is not $A\leftrightarrow C$).
\label{lem:cpsubpag}
\end{lemma}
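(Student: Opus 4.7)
The plan is to reduce the statement to a structural property of the full PAG $\mathcal{P}$ and then invoke FCI's orientation rule R1 of \citet{zhang2008completeness} together with an ancestrality-based case analysis over MAGs in the equivalence class.

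Because $\mathcal{P}_\mathbf{A}$ is the induced subgraph of $\mathcal{P}$ over $\mathbf{A}$, each edge between two nodes in $\mathbf{A}$ appears in $\mathcal{P}_\mathbf{A}$ with identical endpoint marks as in $\mathcal{P}$. Since $A, B, C \in \mathbf{A}$, the configuration $A*\!\!\rightarrow B\circ\!\!\--\!\!* C$ holds in $\mathcal{P}_\mathbf{A}$ iff it holds in $\mathcal{P}$, and the $AC$ edge (if any) is inherited verbatim. So it suffices to establish both claims for $\mathcal{P}$ itself. For adjacency of $A$ and $C$: if they were non-adjacent, then because the mark at $B$ on $BC$ is a circle, the equivalence class would contain a MAG $\mathcal{M}$ in which this mark is an arrowhead; the triple would then form the unshielded collider $A*\!\!\rightarrow B\leftarrow\!\!* C$ at $B$, which is invariant across the equivalence class, making the arrowhead at $B$ on $BC$ invariant in $\mathcal{P}$---contradicting the circle. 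This is the contrapositive of R1 of \citet{zhang2008completeness}.

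For the arrowhead at $C$ on $AC$, the circle at $B$ on $BC$ guarantees a MAG $\mathcal{M}_1$ in the equivalence class with $B\rightarrow C$. In $\mathcal{M}_1$, the invariance of the arrowhead at $B$ on $AB$ forces either $A\rightarrow B$ or $A\leftrightarrow B$. Assuming for contradiction that the $AC$ edge in $\mathcal{M}_1$ is $C\rightarrow A$, we obtain (i) a directed cycle $A\rightarrow B\rightarrow C\rightarrow A$ when $A\rightarrow B$, or (ii) an almost directed cycle when $A\leftrightarrow B$ (since $B\rightarrow C\rightarrow A$ makes $B$ an ancestor of $A$, yet $A\leftrightarrow B$ declares them spouses). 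Either case violates the ancestrality axiom of MAGs, so in $\mathcal{M}_1$ the $AC$ edge has arrowhead at $C$. By the completeness of the FCI orientation rules \citep{zhang2008completeness}, this ancestrality-forced arrowhead propagates as an invariant PAG mark, giving $A*\!\!\rightarrow C$ in $\mathcal{P}$.

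For the \emph{furthermore} clause, assume $A\rightarrow B$ in $\mathcal{P}$ (invariant tail at $A$, so $A$ is an ancestor of $B$ in every MAG) and suppose for contradiction $A\leftrightarrow C$ in $\mathcal{P}$ (so $A$ and $C$ are spouses in every MAG, neither an ancestor of the other). In the MAG $\mathcal{M}_1$ with $B\rightarrow C$, the chain $A\rightarrow B\rightarrow C$ makes $A$ an ancestor of $C$, yielding an almost directed cycle with $A\leftrightarrow C$---contradiction. Hence the mark at $A$ on $AC$ is not an invariant arrowhead, so the $AC$ edge is either $A\rightarrow C$ or $A\circ\!\!\rightarrow C$. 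The main obstacle in executing this plan is bridging from ``arrowhead at $C$ forced in the particular MAG $\mathcal{M}_1$'' to ``arrowhead at $C$ invariant in $\mathcal{P}$''; this gap is closed by appealing to the completeness of FCI's orientation rules, which ensures that any local invariant forced by ancestrality (as derived above) must surface as an invariant mark of the PAG.
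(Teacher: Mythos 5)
Your reduction to the full PAG is exactly right and is, in fact, the entire content of the paper's own proof beyond a citation: the paper simply invokes Lemma 3.3.1 of \citet{zhang2006causal} for the property on $\mathcal{P}$ and notes that induced subgraphs inherit edges and endpoint marks verbatim. You instead attempt to re-derive Zhang's lemma. Your adjacency argument is fine (an unshielded collider at $B$ would be invariant across the class, contradicting the circle at $B$), and your argument for the \emph{furthermore} clause is sound as far as it goes, since refuting $A\leftrightarrow C$ in $\mathcal{P}$ only requires exhibiting \emph{one} MAG in which the mark at $A$ on that edge is not an arrowhead, and your $\mathcal{M}_1$ serves.

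The genuine gap is in the main step, and you have located it but not closed it. You show that in every MAG in which the circle at $B$ is realized as a tail (so $B\rightarrow C$), ancestrality forces an arrowhead at $C$ on the $A$--$C$ edge. But an invariant arrowhead must hold in \emph{every} MAG of the class, including those in which that circle is realized as an arrowhead, i.e., $B\leftarrow\!\!* C$. For such a MAG $\mathcal{M}_2$, ancestrality alone does not forbid $C\rightarrow A$: the configuration $A*\!\!\rightarrow B\leftarrow\!\!* C$ together with $C\rightarrow A$ contains no directed or almost directed cycle, and the triple at $B$ is shielded, so no unshielded-collider invariance applies. The appeal to the ``completeness of the FCI orientation rules'' cannot bridge this: completeness says that every mark that \emph{is} invariant gets oriented by the rules; it does not convert a mark forced in a subfamily of MAGs into an invariant one. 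To rule out $\mathcal{M}_2$ one must show it is not Markov equivalent to $\mathcal{M}_1$ (e.g., via the discriminating-path / collider-with-order characterization of MAG equivalence), and that is essentially the nontrivial content of Zhang's Lemma 3.3.1. As written, the proof of the first and principal claim is therefore incomplete.
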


\begin{proof}
Lemma 3.3.1 of \citep{zhang2006causal} establishes the above property for every PAG.
 By the definition of an induced subgraph, the property is preserved in $\mathcal{P}_\mathbf{A}$.
\end{proof}

Thus, a characteristic feature of PAGs carries over to their induced subgraphs. It follows that Algorithm~\ref{alg:pto} is sound for induced subgraphs as well.

\begin{proposition}
Let $\mathcal{P}$ be a PAG over $\mathbf{V}$, and let $\mathcal{P}_\mathbf{A}$ be the subgraph of $\mathcal{P}$ induced by $\mathbf{A}\subseteq\mathbf{V}$.
Then, Algorithm~\ref{alg:pto} is sound over $\mathcal{P}_\mathbf{A}$, in the sense that the partial order is valid with respect to $\mathcal{D}_\mathbf{A}$, for every DAG $\mathcal{D}$ in the equivalence class represented by $\mathcal{P}$.
\label{prop:ptosubpag}
\end{proposition}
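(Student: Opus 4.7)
The plan is to show that Algorithm~\ref{alg:pto} applied to $\mathcal{P}_\mathbf{A}$ yields a partial order that respects the ancestral relations of every $\mathcal{D}_\mathbf{A}$, by combining (i) the soundness argument developed for the algorithm on a full PAG by \citet{jaber18}---whose only non-trivial structural ingredient is the property captured in Lemma~\ref{lem:cpsubpag}, just established for $\mathcal{P}_\mathbf{A}$---with (ii) Prop.~\ref{prop:ancsubpag}, to transport the conclusion from possibly-ancestral relations in $\mathcal{P}_\mathbf{A}$ to genuine ancestral relations in $\mathcal{D}_\mathbf{A}$.

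Concretely, I would first establish directly in $\mathcal{P}_\mathbf{A}$ that if $X$ is a possible ancestor of $Y$ in $\mathcal{P}_\mathbf{A}$, then the bucket containing $X$ is not extracted before the bucket containing $Y$. Let $p = \langle X = W_0, \dots, W_k = Y\rangle$ be a potentially directed path in $\mathcal{P}_\mathbf{A}$, let $t$ denote the extraction step of the bucket containing $X$, and assume for contradiction that the bucket containing $Y$ is extracted strictly after step $t$. Walking along $p$, pick the smallest index $l$ such that the bucket of $W_l$ is extracted at step at most $t$ while the bucket of $W_{l+1}$ is extracted later (this exists since $W_0$ and $W_k$ witness the two sides). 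Because distinct buckets have distinct extraction steps, $W_l$ and $W_{l+1}$ lie in different buckets; at the step when $W_l$'s bucket is extracted, $W_{l+1}$'s bucket is still remaining, so the extraction criterion forces the mark at $W_l$ on the edge to $W_{l+1}$ to be an arrowhead, contradicting that $p$ is potentially directed from $X$ to $Y$.

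The second ingredient is well-definedness/termination: at each iteration some bucket must be extractable. This is where Lemma~\ref{lem:cpsubpag} genuinely enters, because the argument of \citet{jaber18} for PAGs derives acyclicity of the bucket-level graph from the ancestral property of PAGs together with the rule $A *\!\!\rightarrow B \circ\!\!-\!\!* C \Rightarrow A *\!\!\rightarrow C$. Both ingredients descend to $\mathcal{P}_\mathbf{A}$: the ancestral condition is inherited by induced subgraphs of a PAG, and Lemma~\ref{lem:cpsubpag} has just been shown to hold in $\mathcal{P}_\mathbf{A}$. With termination in hand, the previous paragraph produces a partial order consistent with possibly-ancestral relations in $\mathcal{P}_\mathbf{A}$; invoking Prop.~\ref{prop:ancsubpag} then upgrades consistency to ancestral relations in $\mathcal{D}_\mathbf{A}$ for every DAG $\mathcal{D}$ in the equivalence class of $\mathcal{P}$.

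The main obstacle I anticipate is the termination step: one must verify that the derived bucket-level graph genuinely has a sink whenever buckets remain, without implicitly relying on any PAG-specific completeness beyond what Lemma~\ref{lem:cpsubpag} provides. This amounts to a short case analysis mirroring the original argument of \citet{jaber18} but carried out inside $\mathcal{P}_\mathbf{A}$, using only induced-subgraph-preserved properties, namely ancestrality and the rule $A *\!\!\rightarrow B \circ\!\!-\!\!* C \Rightarrow A *\!\!\rightarrow C$.
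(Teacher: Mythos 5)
Your proposal is correct and follows essentially the same route as the paper's proof: soundness of the bucket-extraction procedure on $\mathcal{P}_\mathbf{A}$ is reduced to the property in Lemma~\ref{lem:cpsubpag} (which the paper notes is the only structural ingredient the argument of \citet{jaber18} relies on), and Prop.~\ref{prop:ancsubpag} then lifts validity from possible-ancestral relations in $\mathcal{P}_\mathbf{A}$ to ancestral relations in $\mathcal{D}_\mathbf{A}$. You merely make explicit the two sub-steps (order consistency and termination) that the paper defers to \citet{jaber18} and its supplementary material.
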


\begin{proof}
Let $D$ be any DAG in the equivalence class represented by $\mathcal{P}$. By Prop.~\ref{prop:ancsubpag}, the possible-ancestral relations in $\mathcal{P}_\mathbf{A}$ subsume those present in $\mathcal{D}_\mathbf{A}$.
Hence, a partial topological order that is valid with respect to $\mathcal{P}_\mathbf{A}$ is valid with respect to $\mathcal{D}_\mathbf{A}$.
The correctness of Alg.~\ref{alg:pto} with respect to a PAG in~\citep{jaber18} depends only on the property in Lemma~\ref{lem:cpsubpag}, a proof of which is given in the Supplementary Materials for completeness.
Therefore, thanks to Lemma~\ref{lem:cpsubpag}, the algorithm is also sound with respect to an induced subgraph $\mathcal{P}_\mathbf{A}$.
\end{proof}

For example, for $\mathcal{P}_\mathbf{A}$ in Fig.~\ref{fig:subpag}, a partial topological order over the nodes is $V_1<V_2<X<V_4$, which is valid for all the relevant DAGs.

With these results about induced subgraphs of a PAG, we are ready to develop a recursive approach for identification given a PAG, to which we now turn. 


\section{IDENTIFICATION IN PAGS}
\label{sec:algorithm}
\begin{figure*}[t]
\centering
\begin{subfigure}{0.66\columnwidth}
\centering
\begin{tikzpicture}
\tikzset{vertex/.style = {shape=circle,draw,minimum size=1.5em}}
\tikzset{edge/.style = {->,> = latex'}}

\node[below] (v1) at (0,0) {$V_1$};
\node[below] (v2) at (0,1.25) {$V_2$};
\node[below] (x1) at (1.25,0) {$X_1$};
\node[below] (x2) at (1.25,1.25) {$X_2$};
\node[below] (y1) at (2.75,0) {$Y_1$};
\node[below] (y2) at (4,0) {$Y_2$};
\node[below] (y3) at (4,1.25) {$Y_3$};

\draw[shorten <= 2pt,edge] (v1) to node[at start]{$\circ$} (x1);
\draw[shorten <= 2pt,edge] (v2) to node[at start]{$\circ$} (x2);
\draw[edge] (x1) to node[auto=right] {\small v} (y1);
\draw[edge] (x2) to node[auto=left] {\small v} (y3);
\draw[latex'-latex'] (x1) to (x2);
\draw[latex'-latex'] (x1) to (y3);
\draw[latex'-latex'] (x2) to (y2);
\draw[latex'-latex'] (y1) to (y2);

\end{tikzpicture}
\caption{\label{fig:idpag}}
\end{subfigure}%
\begin{subfigure}{0.66\columnwidth}
\centering
\begin{tikzpicture}
\tikzset{vertex/.style = {shape=circle,draw,minimum size=1.5em}}
\tikzset{edge/.style = {->,> = latex'}}

\node[below] (v1) at (0,0) {$V_1$};
\node[below] (v2) at (0,1.25) {$V_2$};
\node[below] (x1) at (1.25,0) {$X_1$};
\node[below] (x2) at (1.25,1.25) {$X_2$};
\node[below] (y1) at (2.75,0) {$Y_1$};
\node[below] (y2) at (4,0) {$Y_2$};

\draw[shorten <= 2pt,edge] (v1) to node[at start]{$\circ$} (x1);
\draw[shorten <= 2pt,edge] (v2) to node[at start]{$\circ$} (x2);
\draw[edge] (x1) to node[auto=right] {\small v} (y1);
\draw[latex'-latex'] (x1) to (x2);
\draw[latex'-latex'] (x2) to (y2);
\draw[latex'-latex'] (y1) to (y2);

\end{tikzpicture}
\caption{\label{fig:idsubpag1}}
\end{subfigure}%
\begin{subfigure}{0.67\columnwidth}
\centering
\begin{tikzpicture}
\tikzset{vertex/.style = {shape=circle,draw,minimum size=1.5em}}
\tikzset{edge/.style = {->,> = latex'}}

\node[below] (v1) at (0,0) {$V_1$};
\node[below] (v2) at (0,1.25) {$V_2$};
\node[below] (x1) at (1.25,0) {$X_1$};
\node[below] (y1) at (2.75,0) {$Y_1$};
\node[below] (y2) at (4,0) {$Y_2$};

\draw[shorten <= 2pt,edge] (v1) to node[at start]{$\circ$} (x1);
\draw[edge] (x1) to node[auto=right] {\small v} (y1);
\draw[latex'-latex'] (y1) to (y2);

\end{tikzpicture}
\caption{\label{fig:idsubpag2}}
\end{subfigure}
%
%
%
\caption{\label{fig:idpagex}Sample PAG $\mathcal{P}$ (left) and induced subgraphs used to identify $Q[\{Y_1,Y_2\}]$.}
\end{figure*}
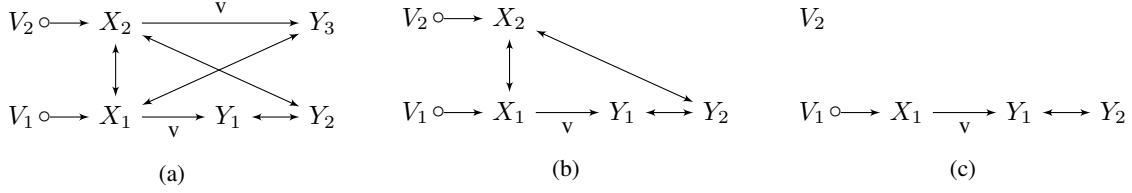

We start by formally defining the notion of identification given a PAG, which generalizes the model-specific notion \citep[pp.~70]{pearl:2k}.

\begin{defn}
Given a PAG $\mathcal{P}$ over $\mathbf{V}$ and a query $P_{\mathbf{x}}(\mathbf{y})$ where $\mathbf{X},\mathbf{Y}\subset\mathbf{V}$,
$P_{\mathbf{x}}(\mathbf{y})$ is identifiable given $\mathcal{P}$ if and only if $P_{\mathbf{x}}(\mathbf{y})$ is identifiable given every DAG $\mathcal{D}(\mathbf{V},\mathbf{L})$  in the Markov equivalence class represented by $\mathcal{P}$, and with the same expression.
\label{def:idpag}
\end{defn}

We first derive an atomic identification criterion analogous to Corollary~\ref{cor:singleiddag}.
As seen in the algorithm for constructing a partial order (Alg. ~\ref{alg:pto}), a bucket or circle component in a PAG is for our purpose analogous to a single node in a DAG. Therefore, the following criterion targets a bucket $\mathbf{X}$ rather than a single node.
\begin{theorem}
Given a PAG $\mathcal{P}$ over $\mathbf{V}$, a partial topological order $\mathbf{B}_1 <\dots<\mathbf{B}_m$ with respect to $\mathcal{P}$, a bucket 
 $\mathbf{X}\!\!=\!\!\mathbf{B}_j\!\subset\!\mathbf{T}\!\subseteq\!\mathbf{V}$, for some $1\!\!\leq\!\! j\!\!\leq\!\! m$ and where $\mathbf{T}$ is a subset of the buckets in $\mathcal{P}$,
and $P_{\mathbf{v}\setminus\mathbf{t}}$ (i.e. $Q[\mathbf{T}]$),
$Q[\mathbf{T}\setminus\mathbf{X}]$ is identifiable if and only if there does not exist $X\in \mathbf{X}$ such that $X$ has a possible child $C\notin \mathbf{X}$ that is in the same pc-component as $X$ in $\mathcal{P}_\mathbf{T}$.
If identifiable, then the expression is given by
\begin{align}
\label{eq:bucketidpag}
Q[\mathbf{T}\setminus\mathbf{X}] &= \frac{P_{\mathbf{v}\setminus\mathbf{t}}}{\prod_{\{i|\mathbf{B_i}\subseteq S^\mathbf{X}\}} P_{\mathbf{v}\setminus\mathbf{t}}(\mathbf{B_i}|\mathbf{B^{(i-1)}})}\times \\
& \qquad\qquad\sum_{\mathbf{x}} \prod_{\{i|\mathbf{B_i}\subseteq S^\mathbf{X}\}} P_{\mathbf{v}\setminus\mathbf{t}}(\mathbf{B_i}|\mathbf{B^{(i-1)}}), \nonumber
\end{align}
where $S^\mathbf{X} = \bigcup_{X\in \mathbf{X}}S^X$, $S^X$ being the dc-component of $X$ in $\mathcal{P}_\mathbf{T}$, and $\mathbf{B}^{(i-1)}$  denoting the set of nodes preceding bucket $\mathbf{B}_i$ in the partial order.
\label{th:bucketidpag}
\end{theorem}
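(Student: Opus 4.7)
The plan is to establish the biconditional in two parts, following the architecture of Corollary~\ref{cor:singleiddag} but lifted to PAGs through the subgraph facts of Section~\ref{sec:subpag}. The sufficiency direction turns the pc-component hypothesis into a descendant-set condition in each underlying DAG, so that Lemma~\ref{lem:subsetiddag} applies and produces Eq.~\ref{eq:bucketidpag}; the necessity direction exhibits two DAGs in the equivalence class that share the observational law but disagree on $Q[\mathbf{T}\setminus\mathbf{X}]$.

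For sufficiency, by Definition~\ref{def:idpag} it suffices to fix an arbitrary DAG $\mathcal{D}$ in the class represented by $\mathcal{P}$ and verify both the identifiability of $Q[\mathbf{T}\setminus\mathbf{X}]$ from $Q[\mathbf{T}]$ in $\mathcal{D}$ and its agreement with the right-hand side of Eq.~\ref{eq:bucketidpag}. I would proceed in three steps. First, using that every bi-directed path in $\mathcal{P}_\mathbf{T}$ is a pc-component path, Proposition~\ref{prop:cccompsubpag} implies $S^\mathbf{X}$ lies within a composite c-component of $\mathcal{D}_\mathbf{T}$ that contains $\mathbf{X}$. Second, the pc-component hypothesis implies $\mathbf{X}$ is a descendant set in $\mathcal{D}$ restricted to this composite c-component: a directed edge in $\mathcal{D}_\mathbf{T}$ from some $X\in\mathbf{X}$ to a sibling $C\notin\mathbf{X}$ inside $S^\mathbf{X}$ would, by Propositions~\ref{prop:ancsubpag} and~\ref{prop:pccompsubpag}, produce a possible child of $X$ sharing its pc-component in $\mathcal{P}_\mathbf{T}$, contradicting the hypothesis. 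Lemma~\ref{lem:subsetiddag} then yields $Q[\mathbf{T}\setminus\mathbf{X}] = (P_{\mathbf{v}\setminus\mathbf{t}}/Q[S^\mathbf{X}])\sum_{\mathbf{x}} Q[S^\mathbf{X}]$ within $\mathcal{D}$. Third, I would show that $Q[S^\mathbf{X}]$ in $\mathcal{D}$ equals the bucket-level product $\prod_{\{i\,|\,\mathbf{B}_i\subseteq S^\mathbf{X}\}} P_{\mathbf{v}\setminus\mathbf{t}}(\mathbf{B}_i\mid\mathbf{B}^{(i-1)})$, by using Proposition~\ref{prop:ptosubpag} to extend the PAG partial order to a valid total topological order of $\mathcal{D}_\mathbf{T}$, then aggregating per-node conditionals into per-bucket conditionals via the chain rule and invoking the c-component factorization of \citep[Lemma~11]{tian2002studies}.

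For necessity, suppose some $X\in\mathbf{X}$ has a possible child $C\notin\mathbf{X}$ in the same pc-component as $X$ in $\mathcal{P}_\mathbf{T}$. I would realize the pc-component connection between $X$ and $C$ in two distinct ways inside the equivalence class---once through pure latent confounding and once through a direct causal path---choosing parameterizations that induce the same observational distribution but different values of $Q[\mathbf{T}\setminus\mathbf{X}]$ after intervention on $\mathbf{X}$. This contradicts Definition~\ref{def:idpag} and rules out identifiability.

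The main obstacle is the third step of sufficiency: bridging the per-node c-component factorization in the DAG with the per-bucket conditional product in the PAG, since buckets may aggregate non-adjacent nodes whose within-bucket total order is DAG-specific. The argument must confirm that the aggregated bucket conditional is invariant under any valid extension of the partial order, so that the PAG-level expression is well-defined and equal to $Q[S^\mathbf{X}]$ in every DAG of the equivalence class. A secondary challenge on the necessity side is to preserve the PAG structure while varying only the components responsible for non-identifiability, so that the two constructed DAGs genuinely sit in the equivalence class of $\mathcal{P}$.
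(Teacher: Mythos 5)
Your sufficiency argument follows essentially the paper's own route: fix an arbitrary DAG $\mathcal{D}$ in the class, use Propositions~\ref{prop:ancsubpag} and~\ref{prop:pccompsubpag} to turn the pc-component hypothesis into the statement that $\mathbf{X}$ is a descendant set in the relevant composite c-component of $\mathcal{D}_\mathbf{T}$, apply Lemma~\ref{lem:subsetiddag}, and then reconcile Eq.~\ref{eq:subsetiddag} with the bucket-level product of Eq.~\ref{eq:bucketidpag} (the paper likewise defers that reconciliation to the supplement, and your worry about invariance under the choice of total order extending the partial order is the right thing to check there). One caveat: the descendant-set step needs more care than ``a directed edge from $X$ to a sibling $C$ would produce a possible child of $X$ sharing its pc-component,'' because the child $C$ of some $X_i\in\mathbf{X}$ may sit in the same c-component of the DAG with a \emph{different} $X_j\in\mathbf{X}$; the paper resolves this with a case split on the mark at $X_j$ and an appeal to Lemma~\ref{lem:cpsubpag} to transfer the bi-directed connection back to $X_i$. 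Your sketch elides this, but the idea is recoverable.

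The genuine gap is in the necessity direction. You propose to realize the $X$--$C$ connection ``once through pure latent confounding and once through a direct causal path'' in two different DAGs of the class. This does not cover the case that actually requires work: when the edge $X\rightarrow C$ is \emph{visible} in $\mathcal{P}_\mathbf{T}$ --- hence, by the definition of visibility, unconfounded in \emph{every} DAG of the class --- there is no freedom to introduce a latent common cause of $X$ and $C$, yet the theorem still asserts non-identifiability because $X$ and $C$ lie in the same pc-component via a longer collider path $p$ of invisible edges. The paper's proof handles this by taking the shortest such path $\langle X=T_0,T_1,\dots,T_m=C\rangle$, showing $X$ is the only node of $\mathbf{X}$ on it, and constructing a single DAG in the class in which the nodes of $p$ form a c-component through bi-directed edges, so that $\mathbf{F}=\{X,T_1,\dots,C\}$ and $\mathbf{F}'=\{T_1,\dots,C\}$ form a hedge; non-identifiability then follows from Theorem~4 of \citep{shpitser2006identification} in that one DAG, which suffices under Definition~\ref{def:idpag}. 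Your two-model plan also has a quieter problem: the two models must agree on the \emph{given} distribution $Q[\mathbf{T}]=P_{\mathbf{v}\setminus\mathbf{t}}$, not merely on $P(\mathbf{v})$, and arranging that across two structurally different DAGs while keeping both inside the equivalence class of $\mathcal{P}$ is exactly the hard construction that the hedge machinery is invoked to avoid. Without the visible-edge case and without reducing to a hedge (or an equally explicit counterexample pair), the necessity direction is incomplete.
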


\begin{psketch}
\textbf{(if) }
Let $\mathcal{D}$ be any DAG in the equivalence class represented by $\mathcal{P}$, $\mathcal{D}_\mathbf{T}$ be the induced subgraph over $\mathbf{T}$, and $S'$ be the smallest composite c-component containing $\mathbf{X}$ in $\mathcal{D}_\mathbf{T}$. We show that $\mathbf{X}$ is a descendant set in $\mathcal{D}_{S'}$.
Suppose otherwise for the sake of contradiction.
Then, there is a node $C\in S'\setminus\mathbf{X}$ such that $C$ is a child of $X_i$ and is in the same c-component with $X_j$, where $X_i,X_j\in\mathbf{X}$ and possibly $i=j$.
By Prop.~\ref{prop:pccompsubpag}, $X_j$ is in the same pc-component with $C$ in $\mathcal{P}_\mathbf{T}$.
Let $T_i$ be the node closest to $X_j$ along the collider path in $\mathcal{P}_\mathbf{T}$ between $X_j$ and $C$ consistent with Def.~\ref{def:pccomp}.
If the edge between $X_j$ and $T_i$ in $\mathcal{P}_\mathbf{T}$ is not into $X_j$, then $X_j$ is in the same pc-component with a possible child as the edge is not visible.
This violates the criterion stated in the theorem.
Otherwise, the edge is $X_j\leftrightarrow T_i$ and there exist a bi-directed edge between $T_i$ and every node in $\mathbf{X}$ (which follows from Lemma~\ref{lem:cpsubpag}).
Hence, $X_i$ is in the same pc-component with a possible child $C$ in $\mathcal{P}_\mathbf{T}$ (Prop.~\ref{prop:ancsubpag}), and the criterion stated in the theorem is violated again.
Therefore, $\mathbf{X}$ is a descendant set in $\mathcal{D}_{S'}$ and $Q[\mathbf{T}\setminus\mathbf{X}]$ is identifiable from $Q[\mathbf{T}]$ by Lemma~\ref{lem:subsetiddag}.
It remains to show that Eq.~\ref{eq:bucketidpag} is equivalent to Eq.~\ref{eq:subsetiddag} for $\mathcal{D}$.
The details for this step are left to the Supplementary Material.

\textbf{(only if) } 
Suppose the criterion in question is not satisfied.
Then some $X_i\in\mathbf{X}$ is in the pc-component with a possible child $C\notin\mathbf{X}$ in $\mathcal{P}_\mathbf{T}$.
The edge between $X_i$ and $C$ is $X_i*\!\!\rightarrow C$ as $C$ is outside of $\mathbf{X}$.
If the edge is not visible in $\mathcal{P}_\mathbf{T}$, then this edge is not visible in $\mathcal{P}$ (Lemma~\ref{lem:subpagvis}).
Hence, we can construct a DAG $\mathcal{D}$ in the equivalence class of $\mathcal{P}$ where $C$ is a child of $X_i$ and the two nodes share a latent variable.
The pair of sets $\mathbf{F}=\{X_i,C\}$ and $\mathbf{F}'=\{C\}$ form a so-called hedge for $Q[\mathbf{T}\setminus\mathbf{X}]$ and the effect is not identifiable in $\mathcal{D}$ \citep[Theorem 4]{shpitser2006identification}, and hence not identifiable given $\mathcal{P}$.

Otherwise, $X_i\rightarrow C$ is visible in $\mathcal{P}_\mathbf{T}$.
So, there is a collider path between $X_i$ and $C$ consistent with Def.~\ref{def:pccomp} such that the two nodes are in the same pc-component.
Let $p\!\!=\!\!\langle X_i\!\!=\!\!T_0,T_1,\dots,T_m\!\!=\!\!C\rangle$ denote the shortest such path in $\mathcal{P}_\mathbf{T}$.
If the edge between $X_i$ and $T_1$ is not into $X_i$, then $T_1$ is a child of $X_i$ and the proof follows as in the previous case.
Otherwise, we have $X_i\leftrightarrow T_1$ and we can show that $X_i$ is the only node along $p$ that belongs to $\mathbf{X}$ (details in the Supplementary Material).
In $\mathcal{P}$, path $p$ is present with $X_i\rightarrow C$ visible.
Hence, we can construct a DAG $\mathcal{D}$ in the equivalence class of $\mathcal{P}$ such that $C$ is a child of $X_i$ and both are in the same c-component through a sequence of bi-directed edges along the corresponding nodes of $p$.
The pair of sets $\mathbf{F}=\{X_i,T_1,\dots,T_m=C\}$ and $\mathbf{F}'=\{T_1,\dots,T_m=C\}$ form a hedge for $Q[\mathbf{T}\setminus\mathbf{X}]$ and the effect is not identifiable in $\mathcal{D}$, and hence it is not identifiable given $\mathcal{P}$.
\end{psketch}

Note that the above result simplifies into computing the interventional distribution $P_\mathbf{x}$ whenever the input distribution is the observational distribution, i.e. $\mathbf{T}=\mathbf{V}$.
Consider the query $P_x(\mathbf{v}\setminus\{x\})$ over the PAG in Fig.~\ref{fig:intropag}.
The intervention node $X$ is not in the same pc-component with any of its possible children ($V_3$ and $V_4$), hence the effect is identifiable and given by 
\begin{align*}
P_x(\mathbf{v}\setminus\{x\})	&= \frac{P(\mathbf{v)}}{P(x|v_1,v_2)}\times \sum_{x'} P(x'|v_1,v_2)\\
									&= P(v_1,v_2)P(v_4,v_5|v_1,v_2,x)
\end{align*}

Putting these observations together leads to the procedure we call $\mathbf{IDP}$, which is shown in Alg.~\ref{alg:idpag}.
In words, the main idea of $\mathbf{IDP}$ goes as follows.
After receiving the sets $\mathbf{X}, \mathbf{Y}$, and a PAG $\mathcal{P}$, the algorithm starts the pre-processing steps: First, it computes $\mathbf{D}$, the set of possible ancestors of $\mathbf{Y}$ in $\mathcal{P}_{\mathbf{V}\setminus\mathbf{X}}$.
Second, it uses $\mathcal{P}_\mathbf{D}$ to partition set $\mathbf{D}$ into cpc-components.
Following the pre-processing stage, the procedure calls the subroutine $\texttt{Identify}$ over each cpc-component $\mathbf{D}_i$ to compute $Q[\mathbf{D}_i]$ from the  observational distribution $P(\mathbf{V})$.
The recursive routine basically checks for the presence of a bucket $\mathbf{B}$ in $\mathcal{P}_\mathbf{T}$ that is a subset of the intervention nodes, i.e. $\mathbf{B}\subseteq\mathbf{T}\setminus\mathbf{C}$, and satisfies the conditions of Thm.~\ref{th:bucketidpag}.
If found, it is able to successfully compute $Q[\mathbf{T}\setminus\mathbf{B}]$ using Eq.~\ref{eq:bucketidpag}, and proceed with a recursive call.
Alternatively, if such a bucket doesn't exist in $\mathcal{P}_\mathbf{T}$, then $\mathbf{IDP}$ throws a failure condition, since it's unable to identify the query.
We show next that this procedure is, indeed, correct. 

\begin{theorem}
Algorithm $\mathbf{IDP}$ (Alg.\ref{alg:idpag}) is sound.
\end{theorem}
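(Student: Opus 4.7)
The plan is to prove soundness by separating the pre-processing (steps 1--3 of $\mathbf{IDP}$) from the recursive $\texttt{Identify}$ routine, and to handle the latter by induction on the size of $\mathbf{T}\setminus\mathbf{C}$. Fix an arbitrary DAG $\mathcal{D}(\mathbf{V},\mathbf{L})$ in the equivalence class represented by $\mathcal{P}$; I must show that any expression returned by $\mathbf{IDP}$ equals $P_\mathbf{x}(\mathbf{y})$ in $\mathcal{D}$, which by Def.~\ref{def:idpag} will establish identification given $\mathcal{P}$.

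First I would justify the pre-processing. By Prop.~\ref{prop:ancsubpag}, the set $\mathbf{D}=\texttt{An}(\mathbf{Y})_{\mathcal{P}_{\mathbf{V}\setminus\mathbf{X}}}$ contains every true ancestor of $\mathbf{Y}$ in $\mathcal{D}_{\mathbf{V}\setminus\mathbf{X}}$; hence the nodes in $\mathbf{V}\setminus(\mathbf{X}\cup\mathbf{D})$ are non-ancestors of $\mathbf{Y}$ after intervening on $\mathbf{X}$ and can be summed out of $Q[\mathbf{V}\setminus\mathbf{X}]$, yielding $P_\mathbf{x}(\mathbf{y})=\sum_{\mathbf{d}\setminus\mathbf{y}} Q[\mathbf{D}]$. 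Proposition~\ref{prop:cccompsubpag} then guarantees that each cpc-component $\mathbf{D}_i$ of $\mathcal{P}_\mathbf{D}$ is a composite c-component of $\mathcal{D}_\mathbf{D}$, so Tian's factorization (\citep[Lemma~11]{tian2002studies}) gives $Q[\mathbf{D}]=\prod_i Q[\mathbf{D}_i]$. Soundness of $\mathbf{IDP}$ therefore reduces to soundness of each $\texttt{Identify}(\mathbf{D}_i,\mathbf{V},P)$.

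Next I would prove by induction on $|\mathbf{T}\setminus\mathbf{C}|$ that whenever $\texttt{Identify}(\mathbf{C},\mathbf{T},Q[\mathbf{T}])$ returns an expression, this expression equals $Q[\mathbf{C}]$ in $\mathcal{D}$. The base case $\mathbf{C}=\mathbf{T}$ is immediate since the input is $Q[\mathbf{T}]$ itself. For the inductive step, the routine locates a bucket $\mathbf{B}\subseteq\mathbf{T}\setminus\mathbf{C}$ in the partial topological order of $\mathcal{P}_\mathbf{T}$, which is valid with respect to $\mathcal{D}_\mathbf{T}$ by Prop.~\ref{prop:ptosubpag}, satisfying the pc-component criterion of Thm.~\ref{th:bucketidpag}. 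Thm.~\ref{th:bucketidpag} then certifies that Eq.~\ref{eq:bucketidpag} computes $Q[\mathbf{T}\setminus\mathbf{B}]$ correctly from $Q[\mathbf{T}]$ in $\mathcal{D}$, in a form that depends only on $P(\mathbf{v})$ and is identical across all members of $[\mathcal{M}]$. The induction hypothesis applied to $\texttt{Identify}(\mathbf{C},\mathbf{T}\setminus\mathbf{B},Q[\mathbf{T}\setminus\mathbf{B}])$ then closes the step, since $|(\mathbf{T}\setminus\mathbf{B})\setminus\mathbf{C}|<|\mathbf{T}\setminus\mathbf{C}|$.

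The main obstacle is ensuring that the graphical machinery---possible-ancestor relations, visibility, pc- and cpc-components, and the partial topological order---remain valid uniformly across every DAG in the equivalence class when applied to induced subgraphs such as $\mathcal{P}_\mathbf{T}$ and $\mathcal{P}_{\mathbf{T}\setminus\mathbf{B}}$, which are not themselves PAGs. This is precisely what Propositions~\ref{prop:ancsubpag}, \ref{prop:pccompsubpag}, \ref{prop:cccompsubpag}, and \ref{prop:ptosubpag}, together with Lemma~\ref{lem:subpagvis}, were established for in Section~\ref{sec:subpag}. Invoking them at each recursive step guarantees that the hypotheses of Thm.~\ref{th:bucketidpag} are met in the induced-subgraph setting, so that the returned expression both equals $P_\mathbf{x}(\mathbf{y})$ in $\mathcal{D}$ and is independent of the particular choice of $\mathcal{D}$, delivering identification in the sense of Def.~\ref{def:idpag}.
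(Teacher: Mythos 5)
Your proposal is correct and follows essentially the same route as the paper's proof: fix an arbitrary DAG in the class, justify summing out to $\mathbf{D}$ via Prop.~\ref{prop:ancsubpag} and Tian's Lemma~10, factorize over cpc-components via Prop.~\ref{prop:cccompsubpag} and Tian's Lemma~11, and delegate the recursive step to Thm.~\ref{th:bucketidpag}. Your explicit induction on $|\mathbf{T}\setminus\mathbf{C}|$ merely spells out what the paper compresses into ``the correctness of \texttt{Identify} follows from Theorem~\ref{th:bucketidpag}.''
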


\begin{proof}
Let $\mathcal{G}(\mathbf{V},\mathbf{L})$ be any causal graph in the equivalence class of PAG $\mathcal{P}$ over $\mathbf{V}$, and let $\mathbf{V}'=\mathbf{V}\setminus\mathbf{X}$.
We have
\begin{align*}
P_\mathbf{x}(\mathbf{y}) = \sum_{\mathbf{v}'\setminus\mathbf{y}}P_\mathbf{x}(\mathbf{v}') = \sum_{\mathbf{v}'\setminus\mathbf{y}} Q[\mathbf{V}'] = \sum_{\mathbf{v}'\setminus\mathbf{d}} \sum_{\mathbf{d}\setminus\mathbf{y}}Q[\mathbf{V}']
\end{align*}

By definition, $\mathbf{D}$ is an ancestral set in $\mathcal{P}_{\mathbf{V}'}$, and hence it is ancestral in $\mathcal{G}_{\mathbf{V}'}$ by Prop.~\ref{prop:ancsubpag}.
So, we have the following by \citep[Lemma 10]{tian2002studies}:
\begin{align}
P_\mathbf{x}(\mathbf{y}) = \sum_{\mathbf{d}\setminus\mathbf{y}} \sum_{\mathbf{v}'\setminus\mathbf{d}} Q[\mathbf{V}'] = \sum_{\mathbf{d}\setminus\mathbf{y}}  Q[\mathbf{D}] 
\label{eq:idsound1}
\end{align}

Using Prop.~\ref{prop:cccompsubpag}, each cpc-component in $\mathcal{P}_\mathbf{D}$ corresponds to a composite c-component in $\mathcal{G}_\mathbf{D}$.
Hence, Eq.~\ref{eq:idsound1} can be decomposed as follows by \citep[Lemma 11]{tian2002studies}.
\begin{align}
P_\mathbf{x}(\mathbf{y}) = \sum_{\mathbf{d}\setminus\mathbf{y}}  
 Q[\mathbf{D}] = \sum_{\mathbf{d}\setminus\mathbf{y}}\prod_i Q[\mathbf{D}_{i}]
\label{eq:idsound2}
\end{align}

\begin{algorithm}[t]
\Input{two disjoint sets $\mathbf{X},\mathbf{Y}\subset\mathbf{V}$}
\Output{Expression for $P_\mathbf{x}(\mathbf{y})$ or \texttt{FAIL}}
\begin{enumerate}
\item Let $\mathbf{D}=\texttt{An}(\mathbf{Y})_{\mathcal{P}_{\mathbf{V}\setminus\mathbf{X}}}$
\item Let the cpc-components of $\mathcal{P}_{\mathbf{D}}$ be $\mathbf{D}_{i}$, $i=1,\dots,k$
\item $P_\mathbf{x}(\mathbf{y})=\sum_{\mathbf{d}\setminus\mathbf{y}}\prod_i \texttt{Identify}(\mathbf{D}_{i},\mathbf{V},P)$
\end{enumerate}
\SetKwFunction{FIdentify}{Identify}
  \SetKwProg{Fn}{Function}{:}{}
  \Fn{\FIdentify{$\mathbf{C}$, $\mathbf{T}$, $Q=Q[\mathbf{T}]$}}{
\If{$\mathbf{C}=\mathbf{T}$}{
    \Return $Q[\mathbf{T}]$\;
}

\relsize{-1}{\tcc{In $\mathcal{P}_\mathbf{T}$, let $\mathbf{B}$ be a bucket, and $C^\mathbf{B}$ be the pc-component of $\mathbf{B}$}}\normalsize
\uIf{$\exists \mathbf{B}\subseteq\mathbf{T}\setminus\mathbf{C}$ such that $C^\mathbf{B}\cap\texttt{Ch}(\mathbf{B})\subseteq\mathbf{B}$}{
	Compute $Q[\mathbf{T}\setminus\mathbf{B}]$ from $Q$; \relsize{-1}{\tcp{Theorem~\ref{th:bucketidpag}}}\normalsize
	\Return $\texttt{Identify}(\mathbf{C},\mathbf{T}\setminus\mathbf{B},Q[\mathbf{T}\setminus\mathbf{B}])$\;
}
\Else{
	\Throw \texttt{FAIL}\;
}}
\caption{\label{alg:idpag}$\mathbf{IDP}(\mathbf{x},\mathbf{y})$ given PAG $\mathcal{P}$}
\end{algorithm}
Eq.~\ref{eq:idsound2} is equivalent to the decomposition we have in step 3 of Alg.~\ref{alg:idpag}, where we attempt to compute each $Q[\mathbf{D}_{i}]$ from $P$.
Finally, the correctness of the recursive routine $\texttt{Identify}$ follows from that of Theorem~\ref{th:bucketidpag}.
\end{proof}

\subsection{ILLUSTRATIVE EXAMPLE}
\label{subsec:idpex}
Consider the query $P_{x_1,x_2}(y_1,y_2,y_3)$ given $\mathcal{P}$ in Fig.~\ref{fig:idpag}.
We have $\mathbf{D}=\{Y_1,Y_2,Y_3\}$, and the cpc-components in $\mathcal{P}_\mathbf{D}$ are $\mathbf{D}_1=\{Y_1,Y_2\}$ and $\mathbf{D}_2=\{Y_3\}$.
Hence, the problem reduces to computing $Q[\{Y_1,Y_2\}]\cdot Q[\{Y_3\}]$.

We start with the call $\texttt{Identify}(\mathbf{D}_1,\mathbf{V},P)$.
Consider the singleton bucket $Y_3$ the pc-component of which includes all the nodes in $\mathcal{P}$.
This node satisfies the condition in $\texttt{Identify}$ as it has no children, and we compute $Q[\mathbf{V}\setminus\{Y_3\}]$ using Theorem~\ref{th:bucketidpag}.
\begin{align}
Q[\mathbf{V}\setminus\{Y_3\}]	&= \frac{P(\mathbf{v})}{P(y_1,y_2,y_3,x_1,x_2|v_1,v_2)} \times\nonumber\\
														&	 \qquad\sum_{y_3} P(y_1,y_2,y_3,x_1,x_2|v_1,v_2)\nonumber\\
														&= P(v_1,v_2)\cdot P(y_1,y_2,x_1,x_2|v_1,v_2)\nonumber\\
                                                        &= P(y_1,y_2,x_1,x_2,v_1,v_2)
\label{eq:idex1}
\end{align}

In the next recursive call, $\mathbf{T}_1=\mathbf{V}\setminus\{Y_3\}$,  $P_{y_3}$  corresponds to Eq.~\ref{eq:idex1}, and the induced subgraph $\mathcal{P}_{\mathbf{T}_1}$ is shown in Fig.~\ref{fig:idsubpag1}.
Now, $X_2$ satisfies the criterion and we can compute $Q[\mathbf{T}_1\setminus\{X_2\}]$ from $P_{y_3}=Q[\mathbf{T}_1]$, i.e., 
\begin{align}
Q[\mathbf{T}_1\setminus\{X_2\}]	&= \frac{P_{y_3}}{P_{y_3}(y_1,y_2,x_1,x_2|v_1,v_2)} \times\nonumber\\
															&	\qquad \sum_{x_2} P_{y_3}(y_1,y_2,x_1,x_2|v_1,v_2)\nonumber\\
                                                            &= P(y_1,y_2,x_1,v_1,v_2)
\label{eq:idex2}
\end{align}

Let $\mathbf{T}_2=\mathbf{T}_1\setminus\{X_2\}$, where the induced subgraph $\mathcal{P}_{\mathbf{T}_2}$ is shown in Fig.~\ref{fig:idsubpag2}.
Now, $X_1$ satisfies the criterion and we can compute $Q[\mathbf{T}_2\setminus\{X_1\}]$ from Eq.~\ref{eq:idex2}, 
\begin{align*}
Q[\mathbf{T}_2\setminus\{X_1\}]	&= \frac{P_{y_3,x_2}}{P_{y_3,x_2}(x_1|v_1,v_2)} \times\\
															&	\qquad \sum_{x_1} P_{y_3,x_2}(x_1|v_1,v_2)\\
															&= \frac{P(v_1,v_2)\cdot P(y_1,y_2,x_1,v_1,v_2)}{P(x_1,v_1,v_2)}\\
															&= P(v_1,v_2)\cdot P(y_1,y_2|x_1,v_1,v_2)
\end{align*}

Choosing $V_1$ and $V_2$ in the next two recursive calls, we finally obtain the simplified expression:
\begin{align*}
Q[\{Y_1,Y_2\}] = P(y_1,y_2|x_1)
\label{eq:idex3}
\end{align*}

Next, we solve for $Q[\mathbf{D}_2]$ and we get an expression analogous to that of $Q[\mathbf{D}_1]$.
Hence, the final solution is:
\begin{align*}
P_{x_1,x_2}(y_1,y_2,y_3) = P(y_1,y_2|x_1)\times P(y_3|x_2)
\end{align*}

\subsection{COMPARISON TO STATE OF THE ART} 
\label{sec:comparison}
In the previous section, we formulated an identification algorithm in PAGs for causal queries of the form $P_\mathbf{x}(\mathbf{y})$, $\mathbf{X},\mathbf{Y}\subset\mathbf{V}$.
A natural question arises about the expressiveness of the $\mathbf{IDP}$  in comparison with the state-of-the-art methods.
One of the well established results in the literature is the adjustment method  \citep{perkovic2015complete}, which is complete whenever an adjustment set exists.

In the sequel, we formally show that the proposed algorithm subsumes the adjustment method.
\begin{figure}[t]
\centering
\begin{tikzpicture}[scale=0.9]
\tikzset{vertex/.style = {shape=circle,draw,minimum size=1.5em}}
\tikzset{edge/.style = {->,> = latex'}}

\node[below] (v1) at (0,0) {$V_1$};
\node[below] (x) at (1.5,0) {$X$};
\node[below] (v2) at (1.5,3) {$V_2$};
\node[below] (v3) at (2.25,1.5) {$V_3$};
\node[below] (v4) at (3.75,1) {$V_4$};
\node[below] (z) at (5,0) {$Z$};
\node[below] (y) at (7,0) {$Y$};

\draw[shorten <= 2pt,edge] (v1) to node[at start]{$\circ$} (x);
\draw[latex'-latex'] (x) to (v2);
\draw[latex'-latex'] (v2) to (v3);
\draw[latex'-latex'] (v3) to (v4);
\draw[latex'-latex'] (v4) to (z);
\draw[edge] (v3) to node[auto=left] {\small v} (x);
\draw[edge] (x) to node[auto=left] {\small v} (z);
\draw[edge] (v2) to node[auto=left] {\small v} (y);
\draw[edge] (v4) to node[auto=left] {\small v} (y);
\draw[edge] (z) to node[auto=left] {\small v} (y);

\end{tikzpicture}
\caption{\label{fig:idsingle}Query $P_x(y)$ is identifiable by $\mathbf{IDP}$.}
\vspace{-.1in}
\end{figure}
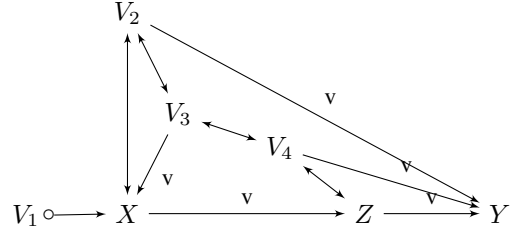

\begin{theorem}
Let $\mathcal{P}$ be a PAG over set $\mathbf{V}$ and let $P_{\mathbf{x}}(\mathbf{y})$ be a causal query where $\mathbf{X},\mathbf{Y}\subset\mathbf{V}$.
If the distribution $P_{\mathbf{x}}(\mathbf{y})$ is not identifiable using $\mathbf{IDP}$ (Alg.~\ref{alg:idpag}), then the effect is not identifiable using the \textit{generalized adjustment criterion} in~\citep{perkovic2015complete}.
\label{th:idvsgac}
\end{theorem}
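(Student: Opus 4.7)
The plan is to prove the contrapositive: assume the generalized adjustment criterion of \citet{perkovic2015complete} yields an adjustment set $\mathbf{Z}$ for $(\mathbf{X},\mathbf{Y})$ relative to $\mathcal{P}$, and show that $\mathbf{IDP}$ identifies $P_\mathbf{x}(\mathbf{y})$. First I would invoke the structural consequences that the existence of $\mathbf{Z}$ imposes on $\mathcal{P}$: (a) amenability, i.e., every proper possibly directed path from $\mathbf{X}$ to $\mathbf{Y}$ in $\mathcal{P}$ starts with a visible edge out of $\mathbf{X}$; (b) disjointness of $\mathbf{Z}$ from the forbidden set of possible descendants of nodes on proper possibly causal paths from $\mathbf{X}$ to $\mathbf{Y}$; and (c) blockage by $\mathbf{Z}$ of every proper definite-status non-causal path from $\mathbf{X}$ to $\mathbf{Y}$.

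Next I would trace $\mathbf{IDP}$ (Alg.~\ref{alg:idpag}). After the pre-processing, it suffices to show that each recursive call $\texttt{Identify}(\mathbf{D}_i,\mathbf{V},P)$ terminates successfully, which reduces to proving that at every invocation with state $(\mathbf{C},\mathbf{T})$ with $\mathbf{C}\subsetneq \mathbf{T}$, there is a bucket $\mathbf{B}\subseteq \mathbf{T}\setminus \mathbf{C}$ in $\mathcal{P}_\mathbf{T}$ whose pc-component contains no possible child outside $\mathbf{B}$, as required by Theorem~\ref{th:bucketidpag}. A natural candidate is a bucket of $\mathbf{T}\setminus \mathbf{C}$ that is latest in the partial order produced by Alg.~\ref{alg:pto} on $\mathcal{P}_\mathbf{T}$, whose validity is guaranteed by Prop.~\ref{prop:ptosubpag}.

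The heart of the argument, and the main obstacle, is the structural claim that some valid bucket must exist whenever an adjustment set does. Suppose for contradiction that every bucket $\mathbf{B}\subseteq \mathbf{T}\setminus \mathbf{C}$ fails the criterion, so some $X\in \mathbf{B}$ shares a pc-component in $\mathcal{P}_\mathbf{T}$ with a possible child $C\notin \mathbf{B}$ via a collider path $\pi$. I would lift $\pi$ to $\mathcal{P}$ using Lemma~\ref{lem:cpsubpag} and the construction used in the proof of Prop.~\ref{prop:pccompsubpag}, and concatenate it with a possibly directed continuation from $C$ through $\mathbf{D}=\texttt{An}(\mathbf{Y})_{\mathcal{P}_{\mathbf{V}\setminus\mathbf{X}}}$ down to some $Y\in \mathbf{Y}$. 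A case analysis turning on whether the mark of $\pi$ at $X$ is an arrowhead or a tail and whether $C$ lies on a possibly causal path to $\mathbf{Y}$ should yield one of two contradictions: either the resulting proper $\mathbf{X}$-to-$\mathbf{Y}$ path is possibly directed but does not begin with a visible edge out of $\mathbf{X}$ (contradicting amenability), or it is a definite-status non-causal path whose colliders all have possible descendants in $\mathbf{Z}$ (so it is active given $\mathbf{Z}$, contradicting blockage) while every non-collider either lies outside $\mathbf{Z}$ or would force $\mathbf{Z}$ to intersect the forbidden set.

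With such a bucket available at every recursion step, Theorem~\ref{th:bucketidpag} delivers the identifying formula and $\mathbf{IDP}$ terminates successfully, completing the contrapositive. The principal technical work is the case analysis above, specifically ensuring that the concatenated $\mathbf{X}$-to-$\mathbf{Y}$ path is \emph{proper} (no internal node in $\mathbf{X}\cup\mathbf{Y}$) and of \emph{definite status} in the non-causal case. I would expect to handle this by taking $\pi$ and its extension to be shortest of their kinds and reusing the sub-path arguments underlying Prop.~\ref{prop:pccompsubpag} and Theorem~\ref{th:bucketidpag}.
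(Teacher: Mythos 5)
Your overall architecture---reduce the claim to showing that a valid bucket exists at every recursive call of \texttt{Identify}, and, when none exists, lift the offending collider path from $\mathcal{P}_\mathbf{T}$ to a proper $\mathbf{X}$-to-$\mathbf{Y}$ path in $\mathcal{P}$ that witnesses a violation of the adjustment criterion---is the same core construction the paper uses. But your execution has a genuine gap at the decisive step: you argue against an \emph{arbitrary} adjustment set $\mathbf{Z}$ and assert that the constructed definite-status non-causal path is active given $\mathbf{Z}$ because its ``colliders all have possible descendants in $\mathbf{Z}$.'' Nothing in your setup justifies this. An arbitrary adjustment set is only required to block every proper definite-status non-causal path and to avoid the forbidden set; it is under no obligation to contain possible descendants of the particular colliders your construction produces. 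If some collider on your path has no possible descendant in $\mathbf{Z}$, the path is simply blocked and no contradiction follows, so the contrapositive argument stalls exactly where it needs to close.

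The missing idea is the completeness result for the canonical set: by \citet[Cor.~4.4]{perkovic2016complete}, an adjustment set for $(\mathbf{X},\mathbf{Y})$ exists if and only if the specific set $\text{Adjust}(\mathbf{X},\mathbf{Y},\mathcal{P})$ (the non-forbidden possible ancestors of $\mathbf{X}\cup\mathbf{Y}$) is one. The paper therefore argues in the direct direction: from the failing condition of \texttt{Identify} it builds a proper definite-status non-causal path and verifies that it is m-connecting given \emph{that one concrete set}---which is checkable, since the colliders arising from the pc-component structure lie among possible ancestors of $\mathbf{Y}$ (hence in or with descendants in $\text{Adjust}(\mathbf{X},\mathbf{Y},\mathcal{P})$) while the relevant non-colliders are forbidden (hence excluded from it). Without routing your argument through this canonical set (or supplying the substantially heavier ``rerouting at colliders'' machinery needed to handle arbitrary $\mathbf{Z}$), your case analysis does not go through. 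The rest of your plan---amenability in the possibly-directed case, properness and definite status via shortest-path choices, and reuse of Lemma~\ref{lem:cpsubpag} and Prop.~\ref{prop:pccompsubpag}---is consistent with what the paper's full proof does.
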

\vspace{-1em}
\begin{psketch}
Whenever $\mathbf{IDP}$ fails to identify some query, it is due to one of the recursive calls to $\texttt{Identify}$.
We use the failing condition inside this call to systematically identify a proper definite status non-causal path from $\mathbf{X}$ to $\mathbf{Y}$ in $\mathcal{P}$ that is m-connecting given set Adjust($\mathbf{X}$,$\mathbf{Y}$,$\mathcal{P}$) \citep[Def. 4.1]{perkovic2016complete}.
As this set fails to satisfy the adjustment criterion, then there exist no adjustment set relative to the pair ($\mathbf{X}$,$\mathbf{Y}$) in $\mathcal{P}$ \citep[Cor. 4.4]{perkovic2016complete}.
The details of the proof are left to the Supplementary Material.
\end{psketch}
Based on this result, one may wonder whether these algorithms are, after all, just equivalent. In reality, \textbf{IDP} captures strictly more identifiable effects than the adjustment criterion. To witness, consider  the PAG in
 Fig.~\ref{fig:idsingle} and note that the causal distribution $P_x(y)$ is identifiable by $\mathbf{IDP}$ but not 
by adjustment in this case.



\section{CONCLUSION}
\label{sec:conclusion}
\vspace{-0.10in}
We studied the problem of identification of interventional distributions in Markov equivalence classes represented by PAGs.
We first investigated graphical properties for induced subgraphs of PAGs over an arbitrary subset of nodes with respect to induced subgraphs of DAGs that are in the equivalence class.
We believe that these results can be useful to general tasks related to causal inference from equivalence classes. 
We further developed an identification algorithm in PAGs and proved it to subsume the state-of-the-art adjustment method.

\section*{Acknowledgments}
\vspace{-0.1in}
We thank Sanghack Lee and the reviewers for all the feedback provided. Bareinboim and Jaber are supported in parts by grants from NSF IIS-1704352 and IIS-1750807 (CAREER). Zhang is supported in part by the Research Grants Council of Hong Kong under the General Research Fund LU13600715.

\bibliographystyle{plainnat}
\bibliography{main}

\end{document}